\newtheorem{definition}{Definition}
\newtheorem{lemma}{Lemma}
\newtheorem{proposition}{Proposition}
\title{On the Estimation Bias in Double Q-Learning}
\author{%
  \hspace{-0.14in}
  Zhizhou Ren$^1$\thanks{Work done while Zhizhou was an undergraduate at Tsinghua University.}~, Guangxiang Zhu$^2$, Hao Hu$^2$, Beining Han$^2$, Jianglun Chen$^2$, Chongjie Zhang$^2$ \\
  \hspace{-0.14in}
  $^1$Department of Computer Science, University of Illinois at Urbana-Champaign \\
  \hspace{-0.14in}
  $^2$Institute for Interdisciplinary Information Sciences, Tsinghua University \\
  \hspace{-0.14in}
  \texttt{zhizhour@illinois.edu},~\texttt{guangxiangzhu@outlook.com} \\
  \hspace{-0.14in}
  \texttt{\{hu-h19, hbn18, chen-jl18\}@mails.tsinghua.edu.cn}\\
  \hspace{-0.14in}
  \texttt{chongjie@tsinghua.edu.cn} \\
}
\begin{document}

\maketitle

\begin{abstract}
	
	Double Q-learning is a classical method for reducing overestimation bias, which is caused by taking maximum estimated values in the Bellman operation. Its variants in the deep Q-learning paradigm have shown great promise in producing reliable value prediction and improving learning performance. However, as shown by prior work, double Q-learning is not fully unbiased and suffers from underestimation bias. In this paper, we show that such underestimation bias may lead to multiple non-optimal fixed points under an approximate Bellman operator. To address the concerns of converging to non-optimal stationary solutions, we propose a simple but effective approach as a partial fix for the underestimation bias in double Q-learning. This approach leverages an approximate dynamic programming to bound the target value. We extensively evaluate our proposed method in the Atari benchmark tasks and demonstrate its significant improvement over baseline algorithms.
\end{abstract}
\section{Introduction}

Value-based reinforcement learning with neural networks as function approximators has become a widely-used paradigm and shown great promise in solving complicated decision-making problems in various real-world applications, including robotics control \citep{lillicrap2016continuous}, molecular structure design \citep{zhou2019optimization}, and recommendation systems \citep{chen2018stabilizing}. Towards understanding the foundation of these successes, investigating algorithmic properties of deep-learning-based value function approximation has attracted a growth of attention in recent years \citep{van2018deep,fu2019diagnosing,achiam2019towards,dong2020on}. One of the phenomena of interest is that Q-learning \citep{watkins1989learning} is known to suffer from overestimation issues, since it takes a maximum operator over a set of estimated action-values. Comparing with underestimated values, overestimation errors are more likely to be propagated through greedy action selections, which leads to an overestimation bias in value prediction \citep{thrun1993issues}. This overoptimistic behavior of decision making has also been investigated in the literature of management science \citep{smith2006optimizer} and economics \citep{thaler1988winner}.

In deep Q-learning algorithms, one major source of value estimation errors comes from the optmization procedure. Although a deep neural network may have a sufficient expressiveness power to represent an accurate value function, the back-end optimization is hard to solve. As a result of computational considerations, stochastic gradient descent is almost the default choice for training deep Q-networks. As pointed out by \citet{riedmiller2005neural} and \citet{van2018deep}, a mini-batch gradient update may have unpredictable effects on state-action pairs outside the training batch. The high variance of gradient estimation by such stochastic methods would lead to an unavoidable approximation error in value prediction, which cannot be eliminated by simply increasing sample size and network capacity. Through the maximum operator in the Q-learning paradigm,  such approximation error would propagate and accumulate to form an overestimation bias. In practice, even if most benchmark environments are nearly deterministic \citep{openaigym}, a dramatic overestimation can be observed \citep{hasselt2016deep}.

Double Q-learning \citep{van2010double} is a classical method to reduce the risk of overestimation, which is a specific variant of the double estimator \citep{stone1974cross} in the Q-learning paradigm. Instead of taking the greedy maximum values, it uses a second value function to construct an independent action-value evaluation as a cross validation. With proper assumptions, double Q-learning was proved to slightly underestimate rather than overestimate the maximum expected values \citep{van2010double}. This technique has become a default implementation for stabilizing deep Q-learning algorithms \citep{hessel2018rainbow}. In continuous control domains, a famous variant named clipped double Q-learning \citep{fujimoto2018addressing} also shows great success in reducing the accumulation of errors in actor-critic methods \citep{haarnoja2018soft, kalashnikov2018qt}.

To understand algorithmic properties of double Q-learning and its variants, most prior work focus on the characterization of one-step estimation bias, i.e., the expected deviation from target values in a single step of Bellman operation \citep{lan2020maxmin, chen2021randomized}. In this paper, we present a different perspective on how these one-step errors accumulate in stationary solutions. We first review a widely-used analytical model introduced by \citet{thrun1993issues} and reveal a fact that, due to the perturbation of approximation error, both double Q-learning and clipped double Q-learning have multiple approximate fixed points in this model. This result raises a concern that double Q-learning may easily get stuck in some local stationary regions and become inefficient in searching for the optimal policy. Motivated by this finding, we propose a novel value estimator, named \textit{doubly bounded estimator}, that utilizes an abstracted dynamic programming as a lower bound estimation to rule out the potential non-optimal fixed points. The proposed method is easy to be combined with other existing techniques such as clipped double Q-learning. We extensively evaluate our approach on a variety of Atari benchmark tasks, and demonstrate significant improvement over baseline algorithms in terms of sample efficiency and convergence performance.
\section{Background}

Markov Decision Process \citep[MDP;][]{richard1957dynamic} is a classical framework to formalize an agent-environment interaction system which can be defined as a tuple $\mathcal{M}=\langle \mathcal{S}, \mathcal{A}, P, R, \gamma \rangle$. We use $\mathcal{S}$ and $\mathcal{A}$ to denote the state and action space, respectively. $P(s'|s,a)$ and $R(s,a)$ denote the transition and reward functions, which are initially unknown to the agent. $\gamma$ is the discount factor. The goal of reinforcement learning is to construct a policy $\pi:\mathcal{S}\rightarrow\mathcal{A}$ maximizing  cumulative rewards
\begin{align*}
	V^\pi(s)=\mathbb{E}\left[\biggl.\sum_{t=0}^\infty\gamma^tR(s_t,\pi(s_t))~\biggr|~ s_0=s,s_{t+1}\sim P(\cdot|s_t,\pi(s_t))\right].
\end{align*}
Another quantity of interest can be defined through the Bellman equation $Q^\pi(s,a) = R(s,a)+\gamma\mathbb{E}_{s'\sim P(\cdot|s,a)}\left[V^\pi(s')\right]$. The optimal value function $Q^*$ corresponds to the unique solution of the Bellman optimality equation, $Q^*(s,a) = R(s,a) + \gamma\mathop{\mathbb{E}}_{s'\sim P(\cdot|s,a)}\left[\max_{a'\in\mathcal{A}}Q^*(s',a')\right]$.
Q-learning algorithms are based on the Bellman optimality operator $\mathcal{T}$ stated as follows:
\begin{align}\label{eq:bellman-operator}
(\mathcal{T}Q)(s,a) &= R(s,a) + \gamma\mathop{\mathbb{E}}_{s'\sim P(\cdot|s,a)}\left[\max_{a'\in\mathcal{A}}Q(s',a')\right].
\end{align}
By iterating this operator, value iteration is proved to converge to the optimal value function $Q^*$. To extend Q-learning methods to real-world applications, function approximation is indispensable to deal with a high-dimensional state space. Deep Q-learning \citep{mnih2015human} considers a sample-based objective function and deploys an iterative optimization framework
\begin{align}\label{eq:bellman-error-minimization}
	\theta_{t+1} &= \mathop{\arg\min}_{\theta\in\Theta}\mathop{\mathbb{E}}_{(s,a,r,s')\sim \mathcal{D}}\left[\left(r+\gamma \max_{a'\in\mathcal{A}}Q_{\theta_t}(s',a')-Q_\theta(s,a)\right)^2\right],
\end{align}
in which $\Theta$ denotes the parameter space of the value network, and $\theta_0\in\Theta$ is initialized by some predetermined method. $(s,a,r,s')$ is sampled from a data distribution $\mathcal{D}$ which is changing during exploration. With infinite samples and a sufficiently rich function class, the update rule stated in Eq.~\eqref{eq:bellman-error-minimization} is asymptotically equivalent to applying the Bellman optimality operator $\mathcal{T}$, but the underlying optimization is usually inefficient in practice. In deep Q-learning, Eq.~\eqref{eq:bellman-error-minimization} is optimized by mini-batch gradient descent and thus its value estimation suffers from unavoidable approximation errors.


\section{On the Effects of Underestimation Bias in Double Q-Learning} \label{sec:analysis}

In this section, we will first revisit a common analytical model used by previous work for studying estimation bias \citep{thrun1993issues,lan2020maxmin}, in which double Q-learning is known to have underestimation bias. Based on this analytical model, we show that its underestimation bias could make double Q-learning have multiple fixed-point solutions under an approximate Bellman optimality operator. This result suggests that double Q-learning may have extra non-optimal stationary solutions under the effects of the approximation error.

\subsection{Modeling Approximation Error in Q-Learning}


In Q-learning with function approximation, the ground truth Bellman optimality operator $\mathcal{T}$ is approximated by a regression problem through Bellman error minimization (see Eq.~\eqref{eq:bellman-operator} and Eq.~\eqref{eq:bellman-error-minimization}), which may suffer from unavoidable approximation errors. Following \citet{thrun1993issues} and \citet{lan2020maxmin}, we formalize underlying approximation errors as a set of random noises $e^{(t)}(s,a)$ on the regression outcomes:
\begin{align}\label{eq:noisy-bellman-operator}
	Q^{(t+1)}(s,a) = (\mathcal{T}Q^{(t)})(s,a)+e^{(t)}(s,a).
\end{align}


In this model, double Q-learning \citep{van2010double} can be modeled by two estimator instances $\{Q^{(t)}_i\}_{i\in\{1,2\}}$ with separated noise terms $\{e^{(t)}_i\}_{i\in\{1,2\}}$. For simplification, we introduce a policy function $\pi^{(t)}(s)=\arg\max_a Q_1^{(t)}(s,a)$ to override the state value function as follows:
\begin{align}\label{eq:double-bellman-operator}
	\begin{split}
		 & V^{(t)}(s) =  Q_2^{(t)}\left(s,~\pi^{(t)}(s)={\arg\max}_{a\in\mathcal{A}}Q_1^{(t)}(s,a)\right), \\
		\forall i\in\{1,2\},\quad  & Q_i^{(t+1)}(s,a) = \underbrace{R(s,a) + \gamma\mathbb{E}_{s'}\left[V^{(t)}(s')\right]}_{\text{target value}} + \underbrace{e_i^{(t)}(s,a)}_{\text{approximation error}}.
	\end{split}
\end{align}
A minor difference of Eq.~\eqref{eq:double-bellman-operator} from the definition of double Q-learning given by \citet{van2010double} is the usage of a unified target value $V^{(t)}(s')$ for both two estimators. This simplification does not affect the derived implications, and is also implemented by advanced variants of double Q-learning \citep{fujimoto2018addressing,lan2020maxmin}.

To establish a unified framework for analysis, we use a stochastic operator $\widetilde{\mathcal{T}}$ to denote the Q-iteration procedure $Q^{(t+1)}\gets \widetilde{\mathcal{T}}Q^{(t)}$, e.g., the updating rules stated as Eq.~\eqref{eq:noisy-bellman-operator} and Eq.~\eqref{eq:double-bellman-operator}. We call such an operator $\widetilde{\mathcal{T}}$ as a \textit{stochastic Bellman operator}, since it approximates the ground truth Bellman optimality operator $\mathcal{T}$ and carries some noises due to approximation errors. Note that, as shown in Eq.~\eqref{eq:double-bellman-operator}, the target value can be constructed only using the state-value function $V^{(t)}$. We can define the stationary point of state-values $V^{(t)}$ as the fixed point of a stochastic Bellman operator $\widetilde{\mathcal{T}}$.

\begin{definition}[Approximate Fixed Points]\label{def:fixed-points}
	Let $\widetilde{\mathcal{T}}$ denote a stochastic Bellman operator, such as what are stated in Eq.~\eqref{eq:noisy-bellman-operator} and Eq.~\eqref{eq:double-bellman-operator}. A state-value function $V$ is regarded as an approximate fixed point under a stochastic Bellman operator $\widetilde{\mathcal{T}}$ if it satisfies $\mathbb{E}[\widetilde{\mathcal{T}}V]=V$, where $\widetilde{\mathcal{T}}V$ denotes the output state-value function while applying the Bellman operator $\widetilde{\mathcal{T}}$ on $V$.
\end{definition}

\paragraph{Remark.} In prior work \citep{thrun1993issues}, value estimation bias is defined by expected one-step deviation with respect to the ground truth Bellman operator, i.e., $\mathbb{E}[(\widetilde{\mathcal{T}}V^{(t)})(s)]-(\mathcal{T}V^{(t)})(s)$. The approximate fixed points stated in Definition \ref{def:fixed-points} characterizes the accumulation of estimation biases in stationary solutions.


In Appendix \ref{sec:existence-of-fixed-points}, we will prove the existence of such fixed points as the following statement.
\begin{restatable}{proposition}{FixedPointsExistence}
	Assume the probability density functions of the noise terms $\{e(s,a)\}$ are continuous. The stochastic Bellman operators defined by Eq.~\eqref{eq:noisy-bellman-operator} and Eq.~\eqref{eq:double-bellman-operator} must have approximate fixed points in arbitrary MDPs.
\end{restatable}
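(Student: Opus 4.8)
The plan is to realize the desired fixed point as a Brouwer fixed point of the expected one-step update map $F(V):=\mathbb{E}[\widetilde{\mathcal{T}}V]$. First I would rewrite both stochastic operators purely in terms of the state-value function, which is legitimate since, as already noted around Eq.~\eqref{eq:double-bellman-operator}, the target depends on $V^{(t)}$ only. For Eq.~\eqref{eq:noisy-bellman-operator} this gives $(\widetilde{\mathcal{T}}V)(s)=\max_a\bigl\{R(s,a)+\gamma\mathbb{E}_{s'}[V(s')]+e(s,a)\bigr\}$, and for Eq.~\eqref{eq:double-bellman-operator} it gives $(\widetilde{\mathcal{T}}V)(s)=m_V(s,a^\star)+e_2(s,a^\star)$, where $m_V(s,a):=R(s,a)+\gamma\mathbb{E}_{s'}[V(s')]$ and $a^\star=\arg\max_a\{m_V(s,a)+e_1(s,a)\}$. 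Taking the expectation over the noise then defines $F$ as a map on state-value functions in both cases. I will take $\mathcal{S},\mathcal{A}$ finite, as in the analytical model of \citet{thrun1993issues}, so that $F:\mathbb{R}^{\mathcal{S}}\to\mathbb{R}^{\mathcal{S}}$; for infinite spaces one would replace Brouwer by an appropriate infinite-dimensional fixed-point theorem.

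Second, I would exhibit a compact convex set mapped into itself by $F$. Using $|R|\le R_{\max}$ and the boundedness $|e_i|\le E_{\max}$ assumed in the underestimation analysis, both rewritten operators satisfy $\|\widetilde{\mathcal{T}}V\|_\infty\le R_{\max}+E_{\max}+\gamma\|V\|_\infty$ pointwise, hence $F$ maps the cube $K:=\{V:\|V\|_\infty\le (R_{\max}+E_{\max})/(1-\gamma)\}$ into itself. If the noise is only integrable rather than bounded, the same estimate goes through with $E_{\max}$ replaced by a finite bound on $\mathbb{E}[\max_a(\cdots)]$, which stays finite by integrability.

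Third — and this is the step I expect to be the crux — I would prove that $F$ is continuous on $K$. For Eq.~\eqref{eq:noisy-bellman-operator} this is almost immediate: for each fixed noise realization the integrand $\max_a\{m_V(s,a)+e(s,a)\}$ is a finite max of functions affine in $V$, hence continuous in $V$, and it is dominated by an integrable envelope, so dominated convergence gives continuity of $F$ without even using continuity of the densities. For Eq.~\eqref{eq:double-bellman-operator} the integrand contains an $\arg\max$, which is only locally constant in $(V,e_1)$ away from ties; here the continuity of the noise densities is used. Since the components of $e_1$ have continuous (and, in the standard model, mutually independent) densities, the differences $e_1(s,a)-e_1(s,a')$ are continuously distributed, so for any fixed $V$ the event that two actions tie at some state has probability zero. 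Hence for a.e.\ $(e_1,e_2)$ the maximizing action $a^\star$ is locally constant in a neighborhood of $V$, so the integrand is continuous in $V$ there, and a second application of dominated convergence yields continuity of $F$.

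Finally, Brouwer's fixed-point theorem applied to the continuous self-map $F:K\to K$ produces $V^\star\in K$ with $F(V^\star)=V^\star$, i.e.\ $\mathbb{E}[\widetilde{\mathcal{T}}V^\star]=V^\star$, which is precisely an approximate fixed point in the sense of Definition~\ref{def:fixed-points}. The main obstacle is the measure-zero argument controlling the $\arg\max$ discontinuity in the double Q-learning operator so that dominated convergence applies; the self-map bound and the invocation of Brouwer are then routine.
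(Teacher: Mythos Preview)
Your proposal is correct and follows essentially the same approach as the paper: define $F(V)=\mathbb{E}[\widetilde{\mathcal{T}}V]$, show it maps a cube $[-L,L]^{|\mathcal{S}|}$ into itself (the paper splits this into two lemmas with slightly different constants for the two operators), verify continuity, and invoke Brouwer. Your treatment of continuity is in fact more careful than the paper's one-line claim that $F$ is continuous because ``all involved formulas only contain elementary functions''; you correctly pinpoint that the $\arg\max$ in the double-estimator case is precisely where the continuous-density hypothesis is needed, and your measure-zero-ties plus dominated-convergence argument fills a gap the paper glosses over.
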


\subsection{Existence of Multiple Approximate Fixed Points in Double Q-Learning Algorithms}

Given the definition of the approximate fixed point, a natural question is whether such kind of fixed points are unique or not. Recall that the optimal value function $Q^*$ is the unique solution of the Bellman optimality equation, which is the foundation of Q-learning algorithms. However, in this section, we will show that, under the effects of the approximation error, the approximate fixed points of double Q-learning may not be unique.

\begin{figure*}[t]
	\vspace{-0.1in}
	\begin{center}
		\begin{subfigure}[c]{0.29\linewidth}
			\parbox[][3cm][c]{\linewidth}{
				\centering
				\vspace{0.1in}
				\input{figures/mdp}
			}
			\caption{A simple construction}
			\label{fig:mdp}
		\end{subfigure}
		\begin{subfigure}[c]{0.34\linewidth}
			\parbox[][3cm][c]{\linewidth}{
				\centering
				\vspace{0.2in}
				\begin{tabular}{c c c}
	\toprule
	$V(s_0)$ & $V(s_1)$ & $\tilde\pi(a_0|s_0)$ \\
	\toprule
	100.162 & 100.0 & 62.2\% \\
	101.159 & 100.0 & 92.9\% \\
	\midrule
	110.0 & 100.0 & 100.0\% \\
	\bottomrule
\end{tabular}
			}
			\caption{Numerical solutions of fixed points}
			\label{fig:numerical-fixed-points}
		\end{subfigure}
		\begin{subfigure}[c]{0.34\linewidth}
			\parbox[][3cm][c]{\linewidth}{
				\centering
				\includegraphics[width=0.8\linewidth]{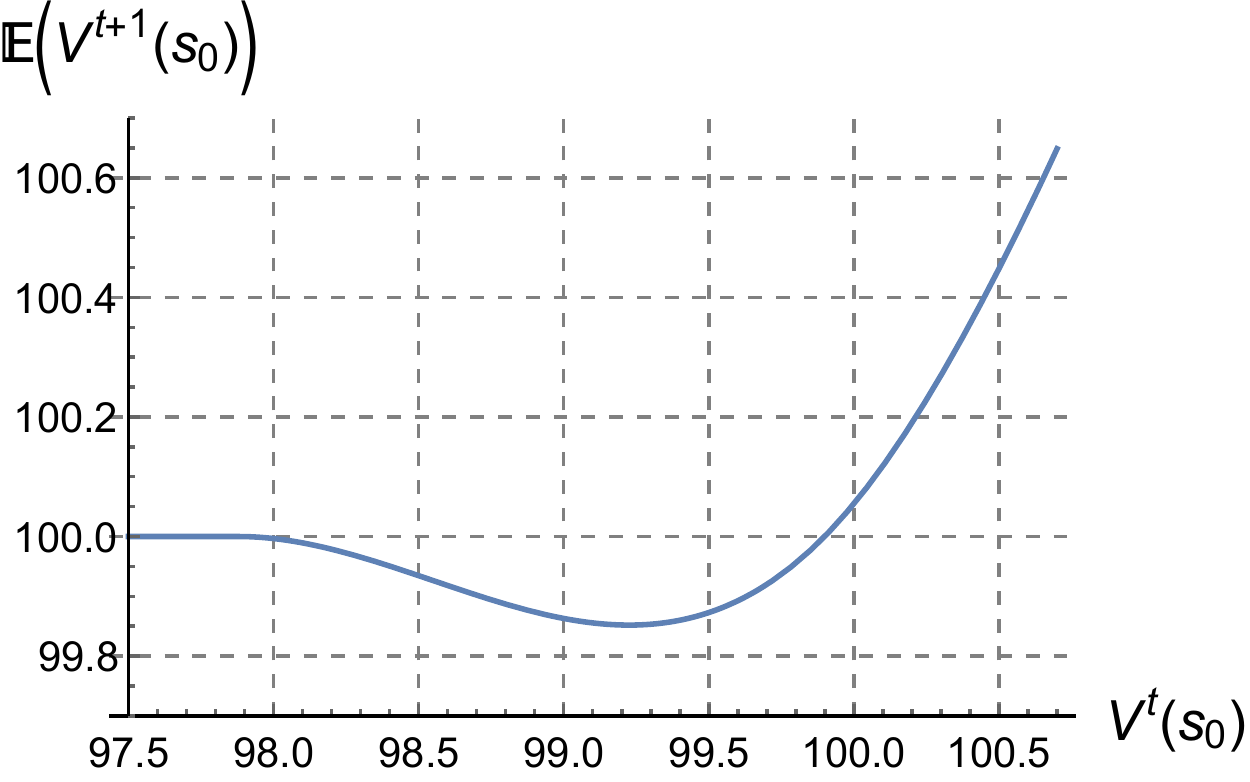}
			}
			\caption{Visualizing non-monotonicity}
			\label{fig:non-monotonicity}
		\end{subfigure}
		\caption{(a) A simple infinite-horizon MDP where double Q-learning stated as \eqref{eq:double-bellman-operator} has multiple approximate fixed points. $R_{i,j}$ is a shorthand of $R(s_i,a_j)$. (b) The numerical solutions of the fixed points produced by double Q-learning in the MDP presented above. $\tilde\pi$ denotes the expected policy generated by the corresponding fixed point under the perturbation of noise $e(s,a)$. A formal description of $\tilde\pi$ refers to Definition \ref{def:induced-policy} in Appendix \ref{sec:induced-policy-proof}. (c) The relation between the input state-value $V^{(t)}(s_0)$ and the expected output state-value $\mathbb{E}[V^{(t+1)}(s_0)]$ generated by double Q-learning in the constructed MDP, in which we assume $V^{(t)}(s_1)=100$.}
		\label{fig:example}
	\end{center}
	\vspace{-0.1in}
\end{figure*}

\vspace{-0.01in}
\paragraph{An Illustrative Example.} Figure \ref{fig:mdp} presents a simple MDP in which double Q-learning stated as Eq.~\eqref{eq:double-bellman-operator} has multiple approximate fixed points. For simplicity, this MDP is set to be fully deterministic and contains only two states. All actions on state $s_1$ lead to a self-loop and produce a unit reward signal. On state $s_0$, the result of executing action $a_0$ is a self-loop with a slightly larger reward signal than choosing action $a_1$ which leads to state $s_1$. The only challenge for decision making in this MDP is to distinguish the outcomes of executing action $a_0$ and $a_1$ on state $s_0$. To make the example more accessible, we assume the approximation errors $\{e^{(t)}(s,a)\}_{t,s,a}$ are a set of independent random noises sampled from a uniform distribution $\textit{Uniform}(-\epsilon,\epsilon)$. This simplification is also adopted by \citet{thrun1993issues} and \citet{lan2020maxmin} in case studies. Here, we select the magnitude of noise as $\epsilon=1.0$ and the discount factor as $\gamma=0.99$ to balance the scale of involved amounts.

\vspace{-0.01in}
Considering to solve the equation $\mathbb{E}[\widetilde{\mathcal{T}}V]=V$ according to the definition of the approximate fixed point (see Definition \ref{def:fixed-points}), the numerical solutions of such fixed points are presented in Table \ref{fig:numerical-fixed-points}. There are three different fixed point solutions. The first thing to notice is that the optimal fixed point $V^*$ is retained in this MDP (see the last row of Table \ref{fig:numerical-fixed-points}), since the noise magnitude $\epsilon=1.0$ is much smaller than the optimality gap $Q^*(s_0,a_0)-Q^*(s_0,a_1)=10$. The other two fixed points are non-optimal and very close to $Q(s_0,a_0)\approx Q(s_0,a_1)=100$. Intuitively, under the perturbation of approximation error, the agent cannot identify the correct maximum-value action for policy improvement in these situations, which is the cause of such non-optimal fixed points. To formalize the implications, we would present a sufficient condition for the existence of multiple extra fixed points.

\vspace{-0.01in}
\paragraph{Mathematical Condition.} Note that the definition of a stochastic Bellman operator can be decoupled to two parts: (1) Computing target values $\mathcal{T}Q^{(t)}$ according to the given MDP; (2) Perform an imprecise regression and some specific computations to obtain $Q^{(t+1)}$. The first part is defined by the MDP, and the second part is the algorithmic procedure. From this perspective, we can define the input of a learning algorithm as a set of ground truth target values $\{(\mathcal{T}Q^{(t)})(s,a)\}_{s,a}$. Based on this notation, a sufficient condition for the existence of multiple fixed points is stated as follows.
\begin{restatable}{proposition}{Derivative}\label{prop:derivative}
	Let $f_s\left(\{(\mathcal{T}Q)(s,a)\}_{a\in\mathcal{A}}\right)=\mathbb{E}[(\widetilde{\mathcal{T}}V)(s)]$ denote the expected output value of a learning algorithm on state $s$. Assume $f_s(\cdot)$ is differentiable. If the algorithmic procedure $f_s(\cdot)$ satisfies Eq.~\eqref{eq:sufficient-condition}, there exists an MDP such that it has multiple approximate fixed points.
	
	\vspace{-0.25in}
	\begin{align}\label{eq:sufficient-condition}
	\exists s,~\exists i,~\exists X\in\mathbb{R}^{|\mathcal{A}|}, \quad \frac{\partial}{\partial x_i}f_s(X) >1,
	\end{align}
	
	\vspace{-0.15in}
	where $X=\{x_i\}_{i=1}^{|\mathcal{A}|}$ denotes the input of the function $f_s$.
\end{restatable}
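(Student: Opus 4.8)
The plan is to reproduce, in a general MDP, the ``S-shaped'' update curve seen in Figure~\ref{fig:non-monotonicity}: I will design an MDP whose approximate-fixed-point equation (Definition~\ref{def:fixed-points}) collapses to a one-dimensional equation $v = g(v)$ at a single distinguished state, arrange for the input point witnessing \eqref{eq:sufficient-condition} to itself be one of its fixed points, and then use the local slope $>1$ together with a boundedness property of $f_s$ to exhibit a second fixed point on the other side of the diagonal.

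\textbf{Construction and scalar reduction.} Let a state with $|\mathcal{A}|$ actions, an index $i$, and a point $X=(x_1,\dots,x_{|\mathcal{A}|})$ be the witnesses of \eqref{eq:sufficient-condition}, and write $\partial_{x_i}f_s(X)=1+\delta$ with $\delta>0$. I build an MDP with a distinguished state $s_0$ having $|\mathcal{A}|$ actions, so that $f_{s_0}$ equals the witnessing function $f_s$ (the algorithmic map depends only on the algorithm and the number of actions). Action $a_i$ at $s_0$ is a deterministic self-loop with reward $\rho$ (fixed below); each other action $a_j$ leads deterministically to its own absorbing state, with $R(s_0,a_j)$ chosen so that, after substituting the uniquely determined value of that absorbing state, the one-step target $(\mathcal{T}Q)(s_0,a_j)$ equals the constant $x_j$. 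I also take the discount factor close enough to $1$ that $\gamma(1+\delta)>1$, which is possible precisely because $\delta>0$ (this is where the strict inequality in \eqref{eq:sufficient-condition} is used). Now for any state-value function $V$ on this MDP, the fixed-point conditions at the absorbing states force their values, so $V$ is an approximate fixed point iff $v:=V(s_0)$ solves $v=g(v)$, where $g(v):=f_{s_0}(Y(v))$ and $Y(v)$ is the affine target vector whose $i$-th coordinate is $\rho+\gamma v$ and whose remaining coordinates are the constants $x_j$.

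\textbf{Planting a fixed point at the steep point, then a second one.} Set $v_0:=f_{s_0}(X)$ and $\rho:=x_i-\gamma v_0$; then $Y(v_0)=X$, so $g(v_0)=f_{s_0}(X)=v_0$, i.e.\ $v_0$ is a fixed point, and by the chain rule $g'(v_0)=\gamma\,\partial_{x_i}f_{s_0}(X)=\gamma(1+\delta)>1$. Next I invoke that the expected output of the algorithm lies within the range of its targets, $\min_a [Y(v)]_a \le g(v) \le \max_a [Y(v)]_a$ --- a property satisfied by double Q-learning and clipped double Q-learning, whose expected update is a weighted average of the targets up to a nonpositive shift. Since the $j$-coordinates of $Y(v)$ are constant while its $i$-coordinate has slope $\gamma<1$, this sandwich gives $g(v)-v\to+\infty$ as $v\to-\infty$ and $g(v)-v\to-\infty$ as $v\to+\infty$. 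The function $h:=g-\mathrm{id}$ is continuous ($f_{s_0}$ is differentiable and $Y$ is affine), vanishes at $v_0$, and has $h'(v_0)=\gamma(1+\delta)-1>0$, hence $h<0$ just to the left of $v_0$; combined with $h\to+\infty$ as $v\to-\infty$, the intermediate value theorem produces a zero $v_1<v_0$. Thus $g$, and therefore $\widetilde{\mathcal{T}}$ on the constructed MDP, has at least two distinct approximate fixed points (and a third $v_2>v_0$ by the symmetric argument, matching Table~\ref{fig:numerical-fixed-points}).

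\textbf{Where the difficulty lies.} Two points need care. First, the reduction to the scalar equation must be honest: the auxiliary absorbing states must contribute uniquely determined values, so that the approximate-fixed-point set of $\widetilde{\mathcal{T}}$ genuinely coincides with the zero set of $h$. Second, some boundedness hypothesis on $f_s$ is indispensable --- a hypothetical algorithm with $f_s$ globally affine of slope $>1$ would still have a unique fixed point --- so the argument must use the sandwich property (or an equivalent growth bound) that the relevant double-Q variants enjoy. The decisive idea that makes everything work is the choice of $\rho$ forcing the preimage of the steep point $X$ to be a fixed point of $g$; once that is arranged, the slope $>1$ there and the control at infinity immediately force the extra crossing.
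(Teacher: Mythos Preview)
Your reduction to a scalar fixed-point equation via a single self-looping action matches the paper's construction, but the proofs diverge at the decisive step, and yours does not establish the proposition as stated.

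You fix $\gamma$ close to $1$ so that $g'(v_0)=\gamma\,\partial_{x_i}f_s(X)>1$, plant one fixed point at $v_0$, and then look for a second crossing by the intermediate value theorem. For that you need $g(v)-v$ to change sign away from $v_0$, and you supply this via the sandwich $\min_a[Y(v)]_a\le g(v)\le\max_a[Y(v)]_a$ (up to a bounded shift). But this sandwich is \emph{not among the hypotheses}: the proposition assumes only that $f_s$ is differentiable and satisfies~\eqref{eq:sufficient-condition}. Your own remark that ``some boundedness hypothesis on $f_s$ is indispensable'' is precisely where the argument breaks, and the claim is also incorrect --- the paper gets by without it.

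The paper's trick is to treat $\gamma$ as a free design parameter of the MDP rather than fix it in advance. Differentiability gives a small interval $[\bar x_L,\bar x_R]$ around $x_i$ on which the one-variable slice $g$ has $g'>1$, so $g(\bar x_R)-g(\bar x_L)>\bar x_R-\bar x_L$. Now set
\[
\gamma=\frac{\bar x_R-\bar x_L}{g(\bar x_R)-g(\bar x_L)}\in(0,1),\qquad r=\bar x_L-\gamma\,g(\bar x_L)=\bar x_R-\gamma\,g(\bar x_R),
\]
so that \emph{both} $\bar x_L$ and $\bar x_R$ solve $x=r+\gamma\,g(x)$ by construction, yielding two distinct state-values $g(\bar x_L)\neq g(\bar x_R)$. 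No global growth control on $f_s$ is invoked. In particular your putative counterexample --- a globally affine $f_s$ of slope $a>1$ --- dissolves: with $\gamma=1/a$ the equation $x=r+\gamma\,g(x)$ becomes an identity, so there are infinitely many fixed points.

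A secondary issue: your absorbing-state gadget needs those auxiliary states to have uniquely determined values, which again relies on unstated structure of $f_s$ (e.g.\ $f_s(c,\dots,c)=c$). The paper sidesteps this by letting the non-$i$ actions \emph{terminate} the episode with immediate reward $x_j$, so no auxiliary state-values enter at all.
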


The proof of Proposition \ref{prop:derivative} is deferred to Appendix \ref{sec:multiple-fixed-points-proof}. This proposition suggests that, in order to determine whether a Q-learning algorithm may have multiple fixed points, we need to check whether its expected output values could change dramatically with a slight alter of inputs. Considering the constructed MDP as an example, Figure \ref{fig:non-monotonicity} visualizes the relation between the input state-value $V^{(t)}(s_0)$ and the expected output state-value $\mathbb{E}[V^{(t+1)}(s_0)]$ while assuming $V^{(t)}(s_1)=100$ has converged to its stationary point. The minima point of the output value is located at the situation where $V^{(t)}(s_0)$ is slightly smaller than $V^{(t)}(s_1)$, since the expected policy derived by $\widetilde{\mathcal{T}}V^{(t)}$ will have a remarkable probability to choose sub-optimal actions. This local minima suffers from the most dramatic underestimation among the whole curve, and the underestimation will eventually vanish as the value of $V^{(t)}(s_0)$ increases. During this process, a large magnitude of the first-order derivative could be found to meet the condition stated in Eq.~\eqref{eq:sufficient-condition}.

\vspace{-0.1in}
\paragraph{Remark.}
In Appendix \ref{sec:cdq-fixed-points}, we show that clipped double Q-learning, a popular variant of double Q-learning, has multiple fixed points in an MDP slightly modified from Figure \ref{fig:mdp}. Besides, the condition presented in Proposition~\ref{prop:derivative} does not hold in standard Q-learning that uses a single maximum operator (see Proposition~\ref{prop:vanilla_Q-learning_condition} in Appendix). It remains an open question whether standard Q-learning with overestimation bias has multiple fixed points.

\subsection{Diagnosing Non-Optimal Fixed Points} \label{sec:diagnose}

In this section, we first characterize the properties of the extra non-optimal fixed points of double Q-learning in the analytical model. And then, we discuss its connections to the literature of stochastic optimization, which motivates our proposed algorithm in section \ref{sec:lbddqn}.

\vspace{-0.1in}
\paragraph{Underestimated Solutions.}
The first notable thing is that, the non-optimal fixed points of double Q-learning would not overestimate the true maximum values. More specifically, every fixed-point solution could be characterized as the ground truth value of some stochastic policy as the follows:
\begin{restatable}[Fixed-Point Characterization]{proposition}{InducedPolicy}\label{prop:induced-policy}
	Assume the noise terms $e_1$ and $e_2$ are independently generated in the double estimator stated in Eq.~\eqref{eq:double-bellman-operator}. Every approximate fixed point $V$ is equal to the ground truth value function $V^{\tilde\pi}$ with respect to a stochastic policy $\tilde\pi$.
\end{restatable}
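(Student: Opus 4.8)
The plan is to exhibit, for a given approximate fixed point $V$, a stochastic policy $\tilde\pi$ whose true value function $V^{\tilde\pi}$ equals $V$, thereby certifying that $V$ is ``achievable'' and in particular not an overestimate of $V^*$. The natural candidate for $\tilde\pi$ is the \emph{expected greedy policy induced by the noise}: for each state $s$, let $\tilde\pi(a\mid s)=\Pr\big[a=\arg\max_{a'}Q_1^{(t)}(s,a')\big]$, where the probability is over the noise terms $e_1^{(t)}(s,\cdot)$ that perturb the target values into $Q_1^{(t)}$. This is exactly the $\tilde\pi$ referenced in Figure~\ref{fig:example} and formalized in Definition~\ref{def:induced-policy} in the appendix, so I would first invoke that definition and note that, at a fixed point, the distribution of $Q_1^{(t)}(s,\cdot)$ around $R(s,\cdot)+\gamma\mathbb{E}_{s'}[V(s')]$ is stationary, hence $\tilde\pi$ is a well-defined (time-homogeneous) stochastic policy.

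Next I would compute $\mathbb{E}[V^{(t+1)}(s)]$ directly from Eq.~\eqref{eq:double-bellman-operator} and match it to the Bellman consistency equation for $V^{\tilde\pi}$. Writing $T(s,a):=R(s,a)+\gamma\mathbb{E}_{s'}[V(s')]$ for the common target, the fixed-point condition $\mathbb{E}[\widetilde{\mathcal{T}}V]=V$ reads
\begin{align*}
V(s)=\mathbb{E}\Big[Q_2^{(t)}\big(s,\arg\max_a Q_1^{(t)}(s,a)\big)\Big]
=\mathbb{E}\Big[\,T\big(s,\arg\max_a Q_1^{(t)}(s,a)\big)+e_2^{(t)}\big(s,\cdot\big)\Big].
\end{align*}
Here I use the \emph{independence of $e_1$ and $e_2$}: conditioned on the action $a^\star=\arg\max_a Q_1^{(t)}(s,a)$ selected by the $e_1$-perturbed estimator, the term $e_2^{(t)}(s,a^\star)$ has conditional mean zero (it is independent of $a^\star$, which is a function of $e_1$ only, and each $e_i$ is assumed zero-mean as in the Thrun--Schwartz model). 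Hence the $e_2$ contribution vanishes and
\begin{align*}
V(s)=\mathbb{E}_{a^\star}\big[T(s,a^\star)\big]=\sum_{a}\tilde\pi(a\mid s)\Big(R(s,a)+\gamma\,\mathbb{E}_{s'\sim P(\cdot\mid s,a)}[V(s')]\Big).
\end{align*}
This is precisely the Bellman equation $V=\mathcal{T}^{\tilde\pi}V$ for the policy $\tilde\pi$. Since $\mathcal{T}^{\tilde\pi}$ is a $\gamma$-contraction with unique fixed point $V^{\tilde\pi}$, we conclude $V=V^{\tilde\pi}$.

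The main obstacle is the conditional-expectation step: one must argue carefully that $\mathbb{E}\big[e_2^{(t)}(s,a^\star)\big]=0$ even though $a^\star$ is itself random. This is where the independence hypothesis on $e_1,e_2$ is essential — without it, $a^\star$ could be correlated with the sign of $e_2(s,\cdot)$ and the cross term would not drop, which is exactly the mechanism behind clipped double Q-learning's different bias profile. A secondary point to handle is the zero-mean assumption on the noise: the Thrun--Schwartz model used throughout the paper takes $e^{(t)}(s,a)$ to be mean-zero (e.g.\ $\mathrm{Uniform}(-\epsilon,\epsilon)$), and I would state this explicitly as the hypothesis under which the claim holds. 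Finally, one should check that $\tilde\pi$ as defined does not depend on $t$ at the fixed point — this follows because $V$ is stationary, so the induced action distribution is the same at every iteration, making $\tilde\pi$ a legitimate stationary policy and $V^{\tilde\pi}$ well-defined.
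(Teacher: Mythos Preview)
Your proposal is correct and follows essentially the same route as the paper's proof: expand the fixed-point identity $V(s)=\mathbb{E}[\widetilde{\mathcal{T}}V(s)]$, use independence of $e_1,e_2$ (plus zero-mean noise) to drop the $e_2$ term, and recognize the result as the Bellman expectation equation for the induced policy $\tilde\pi$. You are slightly more explicit than the paper in two places---stating the zero-mean hypothesis and invoking the $\gamma$-contraction of $\mathcal{T}^{\tilde\pi}$ to pass from ``$V$ satisfies the Bellman equation'' to ``$V=V^{\tilde\pi}$''---both of which are welcome clarifications.
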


The proof of Proposition \ref{prop:induced-policy} is deferred to Appendix \ref{sec:induced-policy-proof}. In addition, the corresponding stochastic policy $\tilde\pi$ can be interpreted as

\vspace{-0.25in}
\begin{align*}
\tilde\pi(a|s) = \mathbb{P}\biggl[a=\mathop{\arg\max}_{a'\in\mathcal{A}}\biggl(~\underbrace{R(s,a') + \gamma\mathbb{E}_{s'}\left[V(s')\right]}_{(\mathcal{T}Q)(s,a')}+e(s,a')~\biggr)\biggr],
\end{align*}

\vspace{-0.1in}
which is the expected policy generated by the corresponding fixed point along with the random noise $e(s,a')$. This stochastic policy, named as \textit{induced policy}, can provide a snapshot to infer how the agent behaves and evolves around these approximate fixed points. To deliver intuitions, we provide an analogical explanation in the context of optimization as the following arguments.

\vspace{-0.1in}
\paragraph{Analogy with Saddle Points.}
Taking the third column of Table \ref{fig:numerical-fixed-points} as an example, due to the existence of the approximation error, the induced policy $\tilde\pi$ suffers from a remarkable uncertainty in determining the best action on state $s_0$. Around such non-optimal fixed points, the greedy action selection may be disrupted by approximation error and deviate from the correct direction for policy improvement. These approximate fixed points are not necessary to be strongly stationary solutions but may seriously hurt the learning efficiency. If we imagine each iteration of target updating as a step of ``\textit{gradient update}'' for Bellman error minimization, the non-optimal fixed points would refer to the concept of \textit{saddle points} in the context of optimization. As stochastic gradient may be trapped in saddle points, Bellman operation with approximation error may get stuck around non-optimal approximate fixed points. Please refer to section~\ref{sec:two_state_MDP_experiments} for a visualization of a concrete example.

\vspace{-0.1in}
\paragraph{Escaping from Saddle Points.}
In the literature of non-convex optimization, the most famous approach to escaping saddle points is \textit{perturbed gradient descent} \citep{ge2015escaping, jin2017escape}. Recall that, although gradient directions are ambiguous around saddle points, they are not strongly convergent solutions. Some specific perturbation mechanisms with certain properties could help to make the optimizer to escape non-optimal saddle points. Although these methods cannot be directly applied to double Q-learning since the Bellman operation is not an exact gradient descent, it motivates us to construct a specific perturbation as guidance. In section \ref{sec:lbddqn}, we would introduce a perturbed target updating mechanism that uses an external value estimation to rule out non-optimal fixed points.

\section{Doubly Bounded Q-Learning through Abstracted Dynamic Programming}\label{sec:lbddqn}

As discussed in the last section, the underestimation bias of double Q-learning may lead to multiple non-optimal fixed points in the analytical model. A major source of such underestimation is the inherent approximation error caused by the imprecise optimization. Motivated by the literature of escaping saddle points, we introduce a novel method, named \textit{Doubly Bounded Q-learning}, which integrates two different value estimators to reduce the negative effects of underestimation.

\subsection{Algorithmic Framework}

As discussed in section \ref{sec:diagnose}, the geometry property of non-optimal approximate fixed points of double Q-learning is similar to that of saddle points in the context of non-convex optimization. The theory of escaping saddle points suggests that, a well-shaped perturbation mechanism could help to remove non-optimal saddle points from the landscape of optimization \citep{ge2015escaping, jin2017escape}. To realize this brief idea in the specific context of iterative Bellman error minimization, we propose to integrate a second value estimator using different learning paradigm as an external auxiliary signal to rule out non-optimal approximate fixed points of double Q-learning. To give an overview, we first revisit two value estimation paradigms as follows:

\begin{enumerate}
    \item \textbf{Bootstrapping Estimator:} As the default implementation of most temporal-difference learning algorithms, the target value $y^{\text{Boots}}$ of a transition sample $(s_t,a_t,r_t,s_{t+1})$ is computed through bootstrapping the latest value function back-up $V_{\theta_{\text{target}}}$ parameterized by $\theta_{\text{target}}$ on the successor state $s_{t+1}$ as follows:
    \begin{align*}
        y^{\text{Boots}}_{\theta_{\text{target}}}(s_t,a_t) = r_t + \gamma V_{\theta_{\text{target}}}(s_{t+1}),
    \end{align*}
    where the computations of $V_{\theta_{\text{target}}}$ differ in different algorithms (e.g., different variants of double Q-learning).
    
    \item \textbf{Dynamic Programming Estimator:} Another approach to estimating state-action values is applying dynamic programming in an abstracted MDP \citep{li2006towards} constructed from the collected dataset. By utilizing a state aggregation function $\phi(s)$, we could discretize a complex environment to a manageable tabular MDP. The reward and transition functions of the abstracted MDP are estimated through the collected samples in the dataset. An alternative target value $y^{\text{DP}}$ is computed as:
    \begin{align} \label{eq:DP-target}
        y^{\text{DP}}(s_t,a_t)= r_t+\gamma V_{\text{DP}}^*(\phi(s_{t+1})),
    \end{align}
    where $V^*_{\text{DP}}$ corresponds to the optimal value function of the abstracted MDP.
\end{enumerate}

The advantages and bottlenecks of these two types of value estimators lie in different aspects of error controlling. The generalizability of function approximators is the major strength of the \textit{bootstrapping estimator}, but on the other hand, the hardness of the back-end optimization would cause considerable approximation error and lead to the issues discussed in section \ref{sec:analysis}. The tabular representation of the \textit{dynamic programming estimator} would not suffer from systematic approximation error during optimization, but its performance relies on the accuracy of state aggregation and the sampling error in transition estimation.

\paragraph{Doubly Bounded Estimator.} To establish a trade-off between the considerations in the above two value estimators, we propose to construct an integrated estimator, named \textit{doubly bounded estimator}, which takes the maximum values over two different basis estimation methods:
\begin{align} \label{eq:doubly-bounded}
    y^{\text{DB}}_{\theta_{\text{target}}}(s_t,a_t) = \max\left\{y^{\text{Boots}}_{\theta_{\text{target}}}(s_t,a_t),~ y^{\text{DP}}(s_t,a_t)\right\}.
\end{align}
The targets values $y^{\text{DB}}_{\theta_{\text{target}}}$ would be used in training the parameterized value function $Q_{\theta}$ by minimizing
\begin{align*}
L(\theta;\theta_{\text{target}}) = \mathop{\mathbb{E}}_{(s_t,a_t)\sim D} \left(Q_{\theta}(s_t,a_t)-y^{\text{DB}}_{\theta_{\text{target}}}(s_t,a_t)\right)^2,
\end{align*}
where $D$ denotes the experience buffer. Note that, this estimator maintains two value functions using different data structures. $Q_\theta$ is the major value function which is used to generate the behavior policy for both exploration and evaluation. $V_{\text{DP}}$ is an auxiliary value function computed by the abstracted dynamic programming, which is stored in a discrete table. The only functionality of $V_{\text{DP}}$ is computing the auxiliary target value $y^{\text{DP}}$ used in Eq.~\eqref{eq:doubly-bounded} during training.

\paragraph{Remark.}
The name ``\textit{doubly bounded}'' refers to the following intuitive motivation: Assume both basis estimators, $y^{\text{Boots}}$ and $y^{\text{DP}}$, are implemented by their conservative variants and do not tend to overestimate values. The doubly bounded target value $y^{\text{DB}}(s_t,a_t)$ would become a good estimation if either of basis estimator provides an accurate value prediction on the given $(s_t,a_t)$. The outcomes of abstracted dynamic programming could help the bootstrapping estimator to escape the non-optimal fixed points of double Q-learning. The function approximator used by the bootstrapping estimator could extend the generalizability of discretization-based state aggregation. The learning procedure could make progress if either of estimators can identify the correct direction for policy improvement.

\paragraph{Practical Implementation.}
To make sure the dynamic programming estimator does not overestimate the true values, we implement a tabular version of batch-constrained Q-learning \citep[BCQ;][]{fujimoto2019off} to obtain a conservative estimation. The abstracted MDP is constructed by a simple state aggregation based on low-resolution discretization, i.e., we only aggregate states that cannot be distinguished by visual information. We follow the suggestions given by \citet{fujimoto2019off} and \citet{liu2020provably} to prune the unseen state-action pairs in the abstracted MDP. The reward and transition functions of remaining state-action pairs are estimated through the average of collected samples. A detailed description is deferred to Appendix \ref{apx:practical-implementation}.

\subsection{Underlying Bias-Variance Trade-Off} \label{sec:trade-off}

In general, there is no existing approach can completely eliminate the estimation bias in Q-learning algorithm. Our proposed method also focuses on the underlying bias-variance trade-off.

\vspace{-0.05in}
\paragraph{Provable Benefits on Variance Reduction.} The algorithmic structure of the proposed \textit{doubly bounded estimator} could be formalized as a stochastic Bellman operator ${\widetilde{\mathcal{T}}}^{\text{DB}}$:
\begin{align} \label{eq:bounded-Bellman-operator}
    ({\widetilde{\mathcal{T}}}^{\text{DB}} V)(s) = \max\left\{({\widetilde{\mathcal{T}}}^{\text{Boots}} V)(s),~ V^{\text{DP}}(s)\right\},
\end{align}
where ${\widetilde{\mathcal{T}}}^{\text{Boots}}$ is the stochastic Bellman operator corresponding to the back-end bootstrapping estimator (e.g., Eq.~\eqref{eq:double-bellman-operator}). $V^{\text{DP}}$ is an arbitrary deterministic value estimator such as using abstracted dynamic programming. The benefits on variance reduction can be characterized as the following proposition.

\begin{restatable}{proposition}{VarianceReduction} \label{prop:variance-reduction}
	Given an arbitrary stochastic operator ${\widetilde{\mathcal{T}}}^{\text{Boots}}$ and a deterministic estimator $V^{\text{DP}}$,
	\begin{align*}
	    \forall V,~\forall s\in\mathcal{S},\quad \text{Var}[({\widetilde{\mathcal{T}}}^{\text{DB}} V)(s)] \leq \text{Var}[({\widetilde{\mathcal{T}}}^{\text{Boots}} V)(s)],
	\end{align*}
	where $({\widetilde{\mathcal{T}}}^{\text{DB}} V)(s)$ is defined as Eq.~\eqref{eq:bounded-Bellman-operator}.
\end{restatable}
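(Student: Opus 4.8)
The plan is to reduce the claim to a pointwise deterministic fact about the variance of $\max\{Z, c\}$ for a real-valued random variable $Z$ and a constant $c$, then apply it with $Z = (\widetilde{\mathcal{T}}^{\text{Boots}} V)(s)$ and $c = V^{\text{DP}}(s)$ (which is deterministic, hence a constant, once $V$ and $s$ are fixed). Concretely, fix an arbitrary state-value function $V$ and an arbitrary state $s \in \mathcal{S}$, and write $Z := (\widetilde{\mathcal{T}}^{\text{Boots}} V)(s)$, $c := V^{\text{DP}}(s)$, and $W := \max\{Z, c\} = (\widetilde{\mathcal{T}}^{\text{DB}} V)(s)$. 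The goal becomes $\mathrm{Var}[W] \le \mathrm{Var}[Z]$.

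The key step is to show that the map $g(z) = \max\{z, c\}$ is $1$-Lipschitz and nondecreasing, and then invoke the standard fact that applying a $1$-Lipschitz function cannot increase variance. For the clean argument I would use the coupling/symmetrization identity
\begin{align*}
\mathrm{Var}[W] = \tfrac{1}{2}\,\mathbb{E}\bigl[(W - W')^2\bigr],
\end{align*}
where $W'$ is an independent copy of $W$, realized as $g(Z')$ with $Z'$ an independent copy of $Z$; the same identity holds for $Z$. Since $|g(z) - g(z')| = |\max\{z,c\} - \max\{z',c\}| \le |z - z'|$ for all $z, z'$ (a one-line case check on the three orderings of $z, z', c$, or just the general fact that $x \mapsto \max\{x,c\}$ is $1$-Lipschitz), we get $(W - W')^2 \le (Z - Z')^2$ pointwise on the product space, hence
\begin{align*}
\mathrm{Var}[W] = \tfrac{1}{2}\,\mathbb{E}\bigl[(W - W')^2\bigr] \le \tfrac{1}{2}\,\mathbb{E}\bigl[(Z - Z')^2\bigr] = \mathrm{Var}[Z].
\end{align*}
Since $s$ and $V$ were arbitrary, this is exactly the stated inequality. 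I would also note that all quantities are assumed to have finite second moment so the variances are well-defined (this holds e.g. under the continuity/boundedness assumptions used elsewhere for the noise terms, and in any case the inequality is vacuous or trivial otherwise).

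I do not expect a serious obstacle here; the only things to be careful about are (i) making explicit that $V^{\text{DP}}(s)$ is a deterministic scalar, so the randomness in $\widetilde{\mathcal{T}}^{\text{DB}} V$ at state $s$ comes entirely through $\widetilde{\mathcal{T}}^{\text{Boots}}$, which is what lets us treat $c$ as a constant in $g(z) = \max\{z,c\}$; and (ii) the Lipschitz bound, which is elementary. An alternative to the symmetrization identity is to use the variational characterization $\mathrm{Var}[W] = \min_{a \in \mathbb{R}} \mathbb{E}[(W-a)^2]$: picking $a = g(\mathbb{E}[Z])$ and using $1$-Lipschitzness gives $\mathbb{E}[(W - g(\mathbb{E}[Z]))^2] \le \mathbb{E}[(Z - \mathbb{E}[Z])^2] = \mathrm{Var}[Z]$, and the left side is at least $\mathrm{Var}[W]$. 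Either route is short; I would present the symmetrization version as the main proof.
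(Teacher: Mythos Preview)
Your proof is correct and cleaner than the paper's. You reduce the statement to the deterministic fact that $g(z)=\max\{z,c\}$ is $1$-Lipschitz and then use the symmetrization identity $\mathrm{Var}[W]=\tfrac12\,\mathbb{E}[(W-W')^2]$ (or the variational characterization of variance) to conclude in one step. The paper instead proves a restricted lemma, namely $\mathrm{Var}[\max\{x,y\}]\le\mathrm{Var}[x]$ only under the hypothesis $y\le\mathbb{E}[x]$, via a direct integral split around $\mu=\mathbb{E}[x]$; it then handles the general case by a two-case argument plus an iterative bootstrapping (``repeat this procedure several times'') that pushes the cutoff from $\mathbb{E}[x]$ up to $V^{\text{DP}}(s)$. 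Your route avoids both the case split and the iteration, and as a bonus proves the more general statement that applying any $1$-Lipschitz function does not increase variance. The paper's route, on the other hand, keeps the argument entirely inside elementary integral manipulations specific to the clipping map, which some readers may find more transparent; but its iterative step is somewhat informal and would need a limiting argument to be fully rigorous.
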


The proof of Proposition \ref{prop:variance-reduction} is deferred to Appendix \ref{sec:variance-reduction}. The intuition behind this statement is that, with a deterministic lower bound cut-off, the variance of the outcome target values would be reduced, which may contribute to improve the stability of training.

\vspace{-0.05in}
\paragraph{Trade-Off between Different Biases.} In general, the proposed \textit{doubly bounded estimator} does not have a rigorous guarantee for bias reduction, since the behavior of abstracted dynamic programming depends on the properties of the tested environments and the accuracy of state aggregation. In the most unfavorable case, if the dynamic programming component carries a large magnitude of error, the lower bounded objective would propagate high-value errors to increase the risk of overestimation. To address these concerns, we propose to implement a conservative approximate dynamic programming as discussed in the previous section. The asymptotic behavior of batch-constrained Q-learning does not tend to overestimate extrapolated values \citep{liu2020provably}. The major risk of the dynamic programming module is induced by the state aggregation, which refers to a classical problem \citep{li2006towards}. The experimental analysis in section \ref{sec:experiments_discussion} demonstrates that, the error carried by abstracted dynamic programming is acceptable, and it definitely works well in most benchmark tasks.

\section{Experiments}\label{sec:experiment}

In this section, we conduct experiments to demonstrate the effectiveness of our proposed method\footnote[1]{Our code is available at \url{https://github.com/Stilwell-Git/Doubly-Bounded-Q-Learning}.}. We first perform our method on the two-state MDP example discussed in previous sections to visualize its algorithmic effects. And then, we set up a performance comparison on Atari benchmark with several baseline algorithms. A detailed description of experiment settings is deferred to Appendix \ref{sec:experiment-settings}.

\subsection{Tabular Experiments on a Two-State MDP} \label{sec:two_state_MDP_experiments}

To investigate whether the empirical behavior of our proposed method matches it design purpose, we compare the behaviors of double Q-learning and our method on the two-state MDP example presented in Figure~\ref{fig:mdp}. In this MDP, the non-optimal fixed points would get stuck in $V(s_0)\approx 100$ and the optimal solution has $V^*(s_0)=110$. In this tabular experiment, we implement double Q-learning and our algorithm with table-based Q-value functions. The Q-values are updated by iteratively applying Bellman operators as what are presented in Eq.~\eqref{eq:double-bellman-operator} and Eq.~\eqref{eq:bounded-Bellman-operator}. To approximate practical scenarios, we simulate the approximation error by entity-wise Gaussian noises $\mathcal{N}(0,0.5)$. Since the stochastic process induced by such noises suffers from high variance, we perform soft update $Q^{(t+1)}=(1-\alpha)Q^{(t)}+\alpha(\widetilde{\mathcal{T}}Q^{(t)})$ to make the visualization clear, in which $\alpha$ refers to learning rate in practice. We consider $\alpha=10^{-2}$ for all experiments presented in this section. In this setting, we denote one epoch as $\frac{1}{\alpha(1-\gamma)}=10^4$ iterations. A detailed description for the tabular implementation is deferred to Appendix~\ref{apx:tabular_implementation}.

\begin{figure*}[t]
	\centering
	\includegraphics[width=0.98\linewidth]{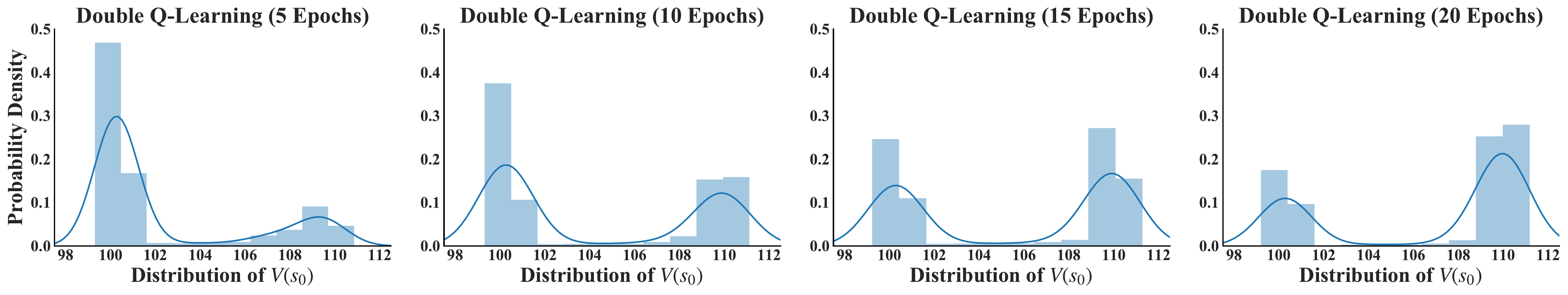}
	\caption{Visualization of the probability density of $V(s_0)$ learned by double Q-learning. In this section, all plotted distributions are estimated by $10^3$ runs with random seeds. We utilize \texttt{seaborn.distplot} package to plot the kernel density estimation curves and histogram bins.}
	\label{fig:double_Q-learning_density}
	\vspace{-0.1in}
\end{figure*}

\begin{figure*}[t]
	\centering
	\includegraphics[width=0.98\linewidth]{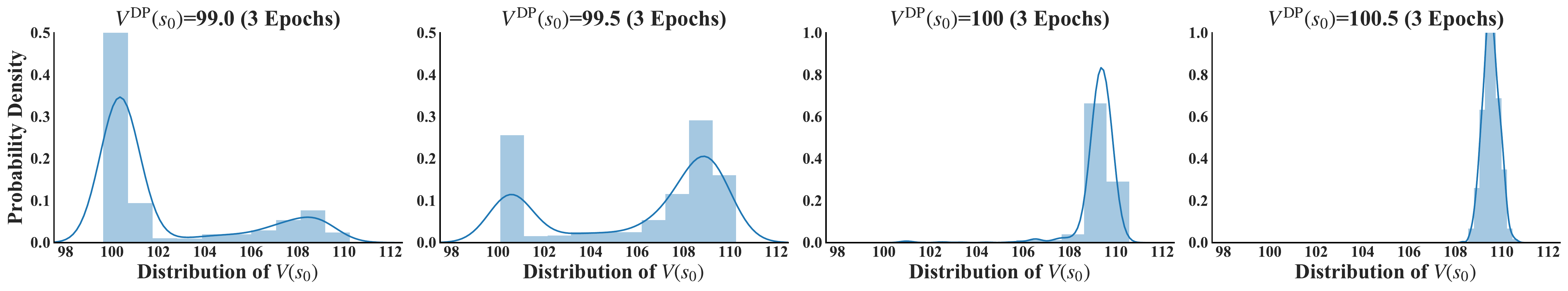}
	\caption{Visualization of the probability density of $V(s_0)$ learned by doubly bounded Q-learning with an imprecise dynamic programming planner. $V^{\text{DP}}(s_0)$ denotes the value estimation given by dynamic programming, which serves as a lower bound in our method.}
	\label{fig:DBADP_density}
	\vspace{-0.1in}
\end{figure*}

We investigate the performance of our method with an imprecise dynamic programming module, in which we only apply lower bound for state $s_0$ with values $V^{\text{DP}}(s_0)\in\{99.0, 99.5, 100.0, 100.5\}$. The experiment results presented in Figure~\ref{fig:double_Q-learning_density} and Figure~\ref{fig:DBADP_density} support our claims in previous sections:
\begin{enumerate}
	\item The properties of non-optimal fixed points are similar to that of saddle points. These extra fixed points are relatively stationary region but not truly static. A series of lucky noises can help the agent to escape from non-optimal stationary regions, but this procedure may take lots of iterations. As shown in Figure~\ref{fig:double_Q-learning_density}, double Q-learning may get stuck in non-optimal solutions and it can escape these non-optimal regions by a really slow rate. After 20 epochs (i.e., $2\cdot 10^5$ iterations), there are nearly a third of runs cannot find the optimal solution.
	\item The design of our doubly bounded estimator is similar to a perturbation on value learning. As shown in Figure~\ref{fig:DBADP_density}, when the estimation given by dynamic programming is slightly higher than the non-optimal fixed points, such as $V_{\text{DP}}(s_0)=100.5$, it is sufficient to help the agent escape from non-optimal stationary solutions. A tricky observation is that $V_{\text{DP}}(s_0)=100$ also seems to work. It is because cutting-off a zero-mean noise would lead to a slight overestimation, which makes the actual estimated value of $V^{\text{DP}}(s_0)$ to be a larger value.
\end{enumerate}

\subsection{Performance Comparison on Atari Benchmark} \label{sec:experiments_discussion}

\begin{figure*}[t]
	\centering
	\includegraphics[width=0.98\linewidth]{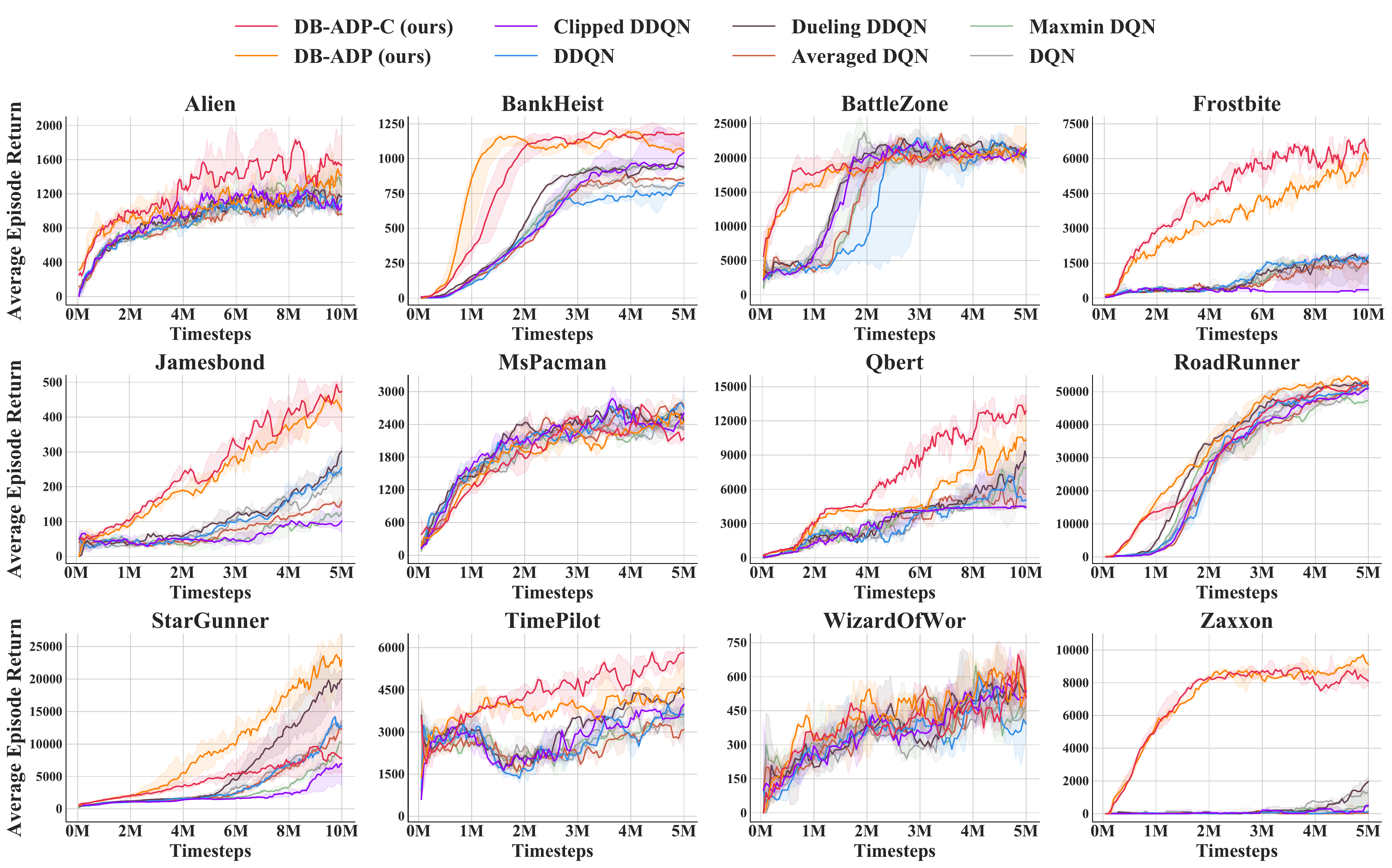}
	\caption{Learning curves on a suite of Atari benchmark tasks. DB-ADP and DB-ADP-C refer to our proposed approach built upon double Q-learning and clipped double Q-learning, respectively.}
	\label{fig:main-experiment}
	\vspace{-0.1in}
\end{figure*}

To demonstrate the superiority of our proposed method, \textit{Doubly Bounded Q-Learning through Abstracted Dynamic Programming} (DB-ADP), we compare with six variants of deep Q-networks as baselines, including DQN \citep{mnih2015human}, double DQN \citep[DDQN;][]{hasselt2016deep}, dueling DDQN \citep{wang2016dueling}, averaged DQN \citep{anschel2017averaged}, maxmin DQN \citep{lan2020maxmin}, and clipped double DQN adapted from \citet{fujimoto2018addressing}. Our proposed doubly bounded target estimation $y^{\text{DB}}$ is built upon two types of bootstrapping estimators that have clear incentive of underestimation, i.e., double Q-learning and clipped double Q-learning. We denote these two variants as DB-ADP-C and DB-ADP according to our proposed method with or without using clipped double Q-learning.

As shown in Figure \ref{fig:main-experiment}, the proposed doubly bounded estimator has great promise in bootstrapping the performance of double Q-learning algorithms. The improvement can be observed both in terms of sample efficiency and final performance. Another notable observation is that, although clipped double Q-learning can hardly improve the performance upon Double DQN, it can significantly improve the performance through our proposed approach in most environments (i.e., DB-ADP-C vs. DB-ADP in Figure \ref{fig:main-experiment}). This improvement should be credit to the conservative property of clipped double Q-learning \citep{fujimoto2019off} that may reduce the propagation of the errors carried by abstracted dynamic programming.

\subsection{Variance Reduction on Target Values}

To support the theoretical claims in Proposition \ref{prop:variance-reduction}, we conduct an experiment to demonstrate the ability of doubly bounded estimator on variance reduction. We evaluate the standard deviation of the target values with respect to training networks using different sequences of training batches. Table \ref{table:variance} presents the evaluation results on our proposed methods and baseline algorithms. The $\dag$-version corresponds to an ablation study, where we train the network using our proposed approach but evaluate the target values computed by bootstrapping estimators, i.e., using the target value formula of double DQN or clipped double DQN. As shown in Table \ref{table:variance}, the standard deviation of target values is significantly reduced by our approaches, which matches our theoretical analysis in Proposition \ref{prop:variance-reduction}. It demonstrates a strength of our approach in improving training stability. A detailed description of the evaluation metric is deferred to Appendix \ref{sec:experiment-settings}.

\subsection{An Ablation Study on the Dynamic Programming Module}

\begin{figure}
	\begin{subfigure}{0.45\linewidth}
		\centering
		\scalebox{0.75}{\begin{tabular}{c c c c}
	\toprule
	\textsc{Task Name} & DB-ADP-C & DB-ADP-C$^\dag$ & CDDQN \\
	\midrule
	\textsc{Alien} & \textbf{0.006} & 0.008 & 0.010 \\
	\textsc{BankHeist} & \textbf{0.009} & 0.010 & 0.010 \\
	\textsc{Qbert} & \textbf{0.008} & 0.010 & 0.011 \\
	\midrule
	\textsc{Task Name} & DB-ADP & DB-ADP$^\dag$ & DDQN \\
	\midrule
	\textsc{Alien} & 0.008 & 0.009 & 0.012 \\
	\textsc{BankHeist} & \textbf{0.009} & 0.011 & 0.013 \\
	\textsc{Qbert} & 0.009 & 0.011 & 0.012 \\
	\bottomrule
\end{tabular}}
		\caption{Variance reduction on target values}
		\label{table:variance}
	\end{subfigure}
	\begin{subfigure}{0.55\linewidth}
		\centering
		\includegraphics[width=0.9\linewidth]{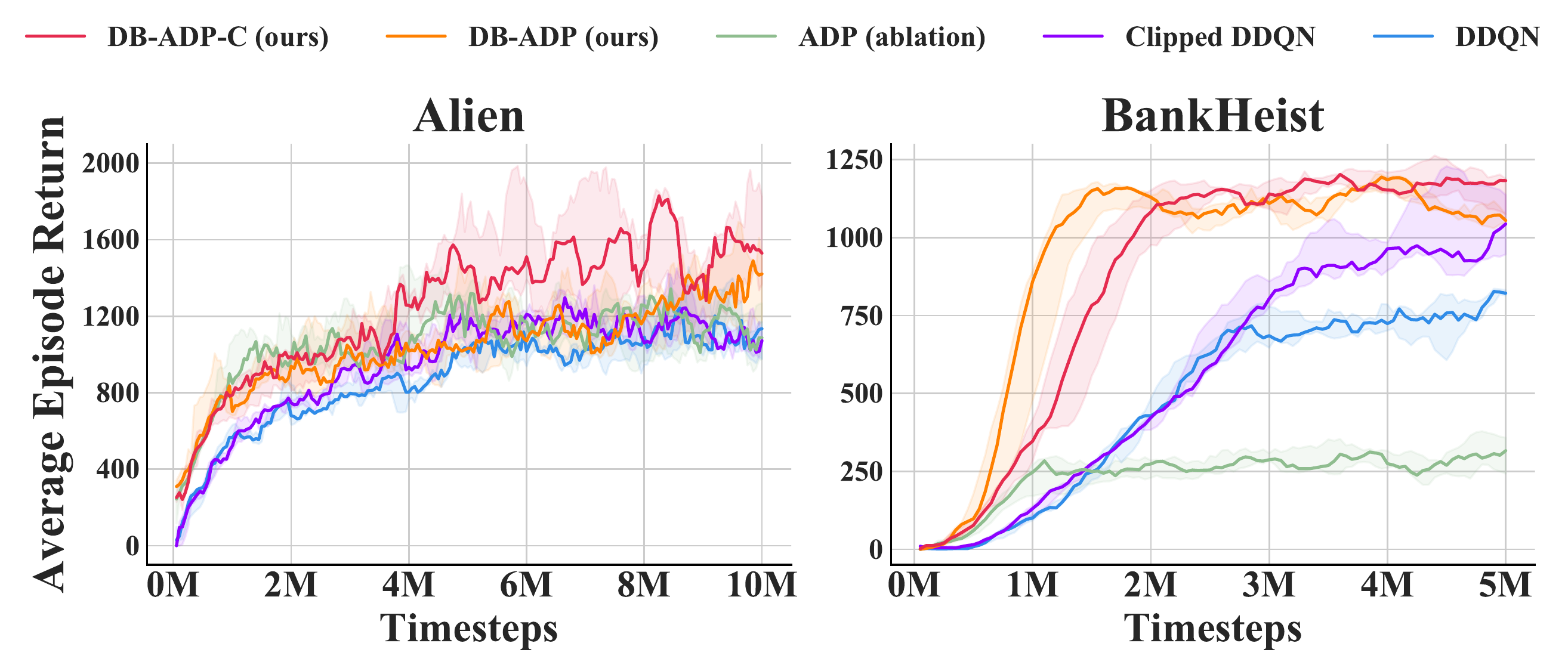}
		\caption{Ablation study on dynamic programming}
		\label{fig:ablation_dp}
	\end{subfigure}
	\caption{(a) Evaluating the standard deviation of target values w.r.t. different training batches. The presented amounts are normalized by the value scale of corresponding runs. ``$\dag$'' refers to ablation studies. (b) An ablation study on the individual performance of the dynamic programming module.}
	\vspace{-0.15in}
\end{figure}

To support the claim that the dynamic programming estimator is an auxiliary module to improving the strength of double Q-learning, we conduct an ablation study to investigate the individual performance of dynamic programming. Formally, we exclude Bellman error minimization from the training procedure and directly optimize the following objective to distill the results of dynamic programming into a generalizable parametric agent:

\vspace{-0.2in}
\begin{align*}
	L^{\text{ADP}}(\theta) = \mathop{\mathbb{E}}_{(s_t,a_t)\sim D}\left[ \left(Q_{\theta}(s_t,a_t)-y^{\text{DP}}(s_t,a_t)\right)^2\right],
\end{align*}

\vspace{-0.1in}
where $y^{\text{DP}}(s_t,a_t)$ denotes the target value directly by dynamic programming. As shown in Figure \ref{fig:ablation_dp}, without integrating with the bootstrapping estimator, the abstracted dynamic programming itself cannot outperform deep Q-learning algorithms. It remarks that, in our proposed framework, two basis estimators are supplementary to each other.

\section{Related Work}

Correcting the estimation bias of Q-learning is a long-lasting problem which induces a series of approaches \citep{lee2013bias, d2016estimating, chen2020decorrelated, zhang2020deep}, especially following the methodology of double Q-learning \citep{zhang2017weighted, zhu2021self}. The most representative algorithm, clipped double Q-learning \citep{fujimoto2018addressing}, has become the default implementation of most advanced actor-critic algorithms \citep{haarnoja2018soft}. Based on clipped double Q-learning, several methods have been investigated to reduce the its underestimation and achieve promising performance \citep{ciosek2019better,li2019mixing}. Other recent advances usually focus on using ensemble methods to further reduce the error magnitude \citep{lan2020maxmin, kuznetsov2020controlling, chen2021randomized}. Statistical analysis of double Q-learning is also an active area \citep{weng2020mean, xiong2020finite} that deserves future studies.

Besides the variants of double Q-learning, using the softmax operator in Bellman operations is another effective approach to reduce the effects of approximation error \citep{fox2016taming, asadi2017alternative, song2019revisiting, kim2019deepmellow, pan2019reinforcement}. The characteristic of our approach is the usage of an approximate dynamic programming. Our analysis would provide a theoretical support for memory-based approaches, such as episodic control \citep{blundell2016model, pritzel2017neural, lin2018episodic, zhu2020episodic, hu2021generalizable}, which are usually designed for near-deterministic environments. Instead of using an explicit planner, \citet{fujita2020distributed} adopts the trajectory return as a lower bound for value estimation. This simple technique also shows promise in improving the efficiency of continuous control with clipped double Q-learning.

\section{Conclusion}

In this paper, we reveal an interesting fact that, under the effects of approximation error, double Q-learning may have multiple non-optimal fixed points. The main cause of such non-optimal fixed points is the underestimation bias of double Q-learning. Regarding this issue, we provide some analysis to characterize what kind of Bellman operators may suffer from the same problem, and how the agent may behave around these fixed points. To address the potential risk of converging to non-optimal solutions, we propose doubly bounded Q-learning to reduce the underestimation in double Q-learning. The main idea of this approach is to leverage an abstracted dynamic programming as a second value estimator to rule out non-optimal fixed points. The experiments show that the proposed method has shown great promise in improving both sample efficiency and convergence performance, which achieves a significant improvement over baselines algorithms.

\begin{ack}
	The authors would like to thank Kefan Dong for insightful discussions.
	This work is supported in part by Science and Technology Innovation 2030 – ``New Generation Artificial Intelligence'' Major Project (No. 2018AAA0100904), a grant from the Institute of Guo Qiang, Tsinghua University, and a grant from Turing AI Institute of Nanjing.
\end{ack}

\bibliography{ref}
\bibliographystyle{plainnat}

\clearpage

\appendix
\allowdisplaybreaks
\section{Omitted Statements and Proofs}

\subsection{The Relation between Estimation Bias and Approximate Fixed Points}\label{sec:bias-as-rewards}

An intuitive characterization of such fixed point solutions is considering one-step estimation bias with respect to the maximum expected value, which is defined as
\begin{align}\label{eq:estimation-bias}
\mathcal{E}(\widetilde{\mathcal{T}},V,s) = \mathbb{E}[(\widetilde{\mathcal{T}}V)(s)] - (\mathcal{T}V)(s),
\end{align}
where $(\mathcal{T}V)(s)$ corresponds to the precise state value after applying the ground truth Bellman operation. The amount of estimation bias $\mathcal{E}$ characterizes the deviation from the standard Bellman operator $\mathcal{T}$, which can be regarded as imaginary rewards in fixed point solutions. 

Every approximate fixed point solution under a stochastic Bellman operator can be characterized as the optimal value function in a modified MDP where only the reward function is changed.

\begin{proposition}
	Let $\widetilde V$ denote an approximation fixed point under a stochastic Bellman operator $\widetilde{\mathcal{T}}$. Define a modified MDP $\widetilde{\mathcal{M}}=\langle\mathcal{S},\mathcal{A},P,R+\widetilde R,\gamma\rangle$ based on $\mathcal{M}$, where the extra reward term is defined as
	\begin{align*}
	\widetilde R(s,a) = \mathcal{E}(\widetilde{\mathcal{T}},\widetilde V,s) = \mathbb{E}[(\widetilde{\mathcal{T}}\widetilde V)(s)] - (\mathcal{T}\widetilde V)(s),
	\end{align*}
	where $\mathcal{E}$ is the one-step estimation bias defined in Eq.~\eqref{eq:estimation-bias}. Then $\widetilde V$ is the optimal state-value function of the modified MDP $\widetilde{\mathcal{M}}$.
\end{proposition}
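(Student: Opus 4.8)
The plan is to verify directly that $\widetilde V$ satisfies the Bellman optimality equation of the modified MDP $\widetilde{\mathcal M}$, and then invoke the standard fact that this equation has a unique solution, namely $V^*_{\widetilde{\mathcal M}}$. Write $\mathcal T_{\widetilde{\mathcal M}}$ for the Bellman optimality operator of $\widetilde{\mathcal M}$ acting on state-value functions,
\[
(\mathcal T_{\widetilde{\mathcal M}} V)(s) = \max_{a\in\mathcal A}\Bigl[\,R(s,a) + \widetilde R(s,a) + \gamma\,\mathbb E_{s'\sim P(\cdot|s,a)}[V(s')]\,\Bigr].
\]
The key structural observation is that $\widetilde R(s,a) = \mathcal E(\widetilde{\mathcal T},\widetilde V,s)$ does not depend on the action $a$, so it factors out of the maximum: for every $V$,
\[
(\mathcal T_{\widetilde{\mathcal M}} V)(s) = \mathcal E(\widetilde{\mathcal T},\widetilde V,s) + \max_{a\in\mathcal A}\Bigl[\,R(s,a) + \gamma\,\mathbb E_{s'}[V(s')]\,\Bigr] = \mathcal E(\widetilde{\mathcal T},\widetilde V,s) + (\mathcal T V)(s),
\]
where $\mathcal T$ is the original ground-truth Bellman optimality operator of $\mathcal M$, read as in Eq.~\eqref{eq:bellman-operator} (so $(\mathcal T V)(s) = \max_a[R(s,a)+\gamma\mathbb E_{s'}[V(s')]]$ for state-value functions).

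Next I would substitute $V=\widetilde V$ and unfold the definition of the one-step bias from Eq.~\eqref{eq:estimation-bias}: since $\mathcal E(\widetilde{\mathcal T},\widetilde V,s) = \mathbb E[(\widetilde{\mathcal T}\widetilde V)(s)] - (\mathcal T\widetilde V)(s)$, the term $(\mathcal T\widetilde V)(s)$ cancels and we obtain $(\mathcal T_{\widetilde{\mathcal M}}\widetilde V)(s) = \mathbb E[(\widetilde{\mathcal T}\widetilde V)(s)]$. Finally, because $\widetilde V$ is an approximate fixed point of $\widetilde{\mathcal T}$ in the sense of Definition~\ref{def:fixed-points}, we have $\mathbb E[\widetilde{\mathcal T}\widetilde V] = \widetilde V$, hence $(\mathcal T_{\widetilde{\mathcal M}}\widetilde V)(s) = \widetilde V(s)$ for every $s$. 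Thus $\widetilde V$ is a fixed point of $\mathcal T_{\widetilde{\mathcal M}}$, i.e.\ it solves the Bellman optimality equation of $\widetilde{\mathcal M}$, and by uniqueness of that solution $\widetilde V = V^*_{\widetilde{\mathcal M}}$.

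I do not expect any genuinely hard step; the argument is a short chain of substitutions. The two points that need a word of care are: (i) the collapse of the maximum, which rests entirely on $\widetilde R$ being a function of $s$ alone — this is exactly why the estimation bias $\mathcal E$ is indexed by $s$ rather than $(s,a)$ — so this is the ``main obstacle'' only in the sense that one must state it carefully; and (ii) the appeal to uniqueness of the optimal value function of $\widetilde{\mathcal M}$, which requires $\widetilde{\mathcal M}$ to be a bona fide discounted MDP. Here I would note that $\widetilde R$ is bounded whenever the noise terms have finite expectation (so that $\mathbb E[\widetilde{\mathcal T}\widetilde V]$ is finite) and $\gamma\in[0,1)$, in which case $\mathcal T_{\widetilde{\mathcal M}}$ is a $\gamma$-contraction in the sup norm and Banach's fixed-point theorem yields both the existence and the uniqueness of $V^*_{\widetilde{\mathcal M}}$, completing the proof.
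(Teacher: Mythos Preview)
Your proof is correct and follows essentially the same approach as the paper: both hinge on the observation that $\widetilde R(s,a)$ depends only on $s$, so it can be pulled out of the $\max_a$, after which the $(\mathcal T\widetilde V)(s)$ term cancels and the fixed-point property $\mathbb E[\widetilde{\mathcal T}\widetilde V]=\widetilde V$ finishes the argument. The only cosmetic difference is that the paper introduces an explicit $\widetilde Q$ and verifies the Bellman optimality equation at the $Q$-level, whereas you work directly with the state-value operator $\mathcal T_{\widetilde{\mathcal M}}$; your route is slightly more direct but the algebra is identical.
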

\begin{proof}
	Define a value function $\widetilde Q$ based on $\widetilde V$, $\forall (s,a)\in\mathcal{S}\times\mathcal{A}$,
	\begin{align*}
	\widetilde Q(s,a) &= R(s,a) + \widetilde R(s,a) + \mathop{\mathbb{E}}_{s'\sim P(\cdot|s,a)}[\widetilde V(s')].
	\end{align*}
	We can verify $\widetilde Q$ is consistent with $\widetilde V$, $\forall s\in\mathcal{S}$,
	\begin{align*}
	\widetilde V(s) &= \mathbb{E}[(\widetilde{\mathcal{T}} \widetilde V)(s)] \\
	&= \mathbb{E}[(\widetilde{\mathcal{T}}\widetilde V)(s)]-(\mathcal{T}\widetilde V)(s) + \max_{a\in\mathcal{A}}\left(R(s,a)+\gamma\mathop{\mathbb{E}}_{s'\sim P(\cdot|s,a)}[\widetilde V(s')]\right) \\
	&= \max_{a\in\mathcal{A}}\left(R(s,a)+\widetilde R(s,a)+\gamma\mathop{\mathbb{E}}_{s'\sim P(\cdot|s,a)}[\widetilde V(s')]\right) \\
	&= \max_{a\in\mathcal{A}} \widetilde Q(s,a).
	\end{align*}
	
	Let ${\mathcal{T}}_{\widetilde{\mathcal{M}}}$ denote the Bellman operator of $\widetilde{\mathcal{M}}$. We can verify $\widetilde Q$ satisfies Bellman optimality equation to prove the given statement, $\forall (s,a)\in\mathcal{S}\times\mathcal{A}$,
	\begin{align*}
	({\mathcal{T}}_{\widetilde{\mathcal{M}}}\widetilde Q)(s,a) &= R(s,a) + \widetilde R(s,a) + \gamma\mathop{\mathbb{E}}_{s'\sim P(\cdot|s,a)}\left[\max_{a'\in\mathcal{A}}\widetilde Q(s',a')\right]\\
	&= R(s,a) + \widetilde R(s,a) + \gamma \mathop{\mathbb{E}}_{s'\sim P(\cdot|s,a)}[\widetilde V(s')]\\
	&= \widetilde Q(s,a).
	\end{align*}
	
	Thus we can see $\widetilde V$ is the solution of Bellman optimality equation in $\widetilde{\mathcal{M}}$.
\end{proof}

\subsection{The Existence of Approximate Fixed Points} \label{sec:existence-of-fixed-points}

The key technique for proving the existence of approximate fixed points is Brouwer's fixed point theorem.

\begin{lemma}\label{fixed_point_lemma_1}
	Let $B=[-L,-L]^d$ denote a $d$-dimensional bounding box. For any continuous function $f:B\to B$, there exists a fixed point $x$ such that $f(x)=x\in B$.
\end{lemma}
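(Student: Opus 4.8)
The plan is to recognize this statement as Brouwer's fixed point theorem for a cube and to prove it by the standard reduction to the \emph{no-retraction theorem}. First I would reduce the problem from the box to the closed Euclidean ball. The box $B$ is a compact, convex subset of $\mathbb{R}^d$ with nonempty interior, so there is a homeomorphism $h:B\to \mathbb{B}^d$ onto the closed unit ball $\mathbb{B}^d$ (for instance, a radial rescaling based at an interior point using the Minkowski gauge of $B$). Given a continuous $f:B\to B$, the conjugate map $g = h\circ f\circ h^{-1}:\mathbb{B}^d\to\mathbb{B}^d$ is continuous, and any fixed point of $g$ pulls back under $h^{-1}$ to a fixed point of $f$. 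Hence it suffices to prove the theorem on $\mathbb{B}^d$, and I would work there for the rest of the argument.

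Next I would carry out the classical retraction construction. Suppose, for contradiction, that $g$ has no fixed point, so $g(x)\neq x$ for every $x\in\mathbb{B}^d$. Define $r(x)$ to be the point of the sphere $S^{d-1}=\partial\mathbb{B}^d$ obtained by following the ray starting at $g(x)$ and passing through $x$ until it meets the boundary; concretely $r(x)=x+t(x)\,(x-g(x))$, where $t(x)\ge 0$ is chosen so that $\|r(x)\|=1$. Imposing $\|r(x)\|^2=1$ yields the quadratic $\|x-g(x)\|^2\,t^2 + 2\langle x,\,x-g(x)\rangle\,t + (\|x\|^2-1)=0$ in $t$. Its leading coefficient $\|x-g(x)\|^2$ is continuous and, by compactness together with $g(x)\neq x$, bounded away from $0$; since the constant term $\|x\|^2-1\le 0$, there is a unique nonnegative root given continuously by the quadratic formula. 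Thus $r:\mathbb{B}^d\to S^{d-1}$ is continuous, and for $x\in S^{d-1}$ the ray meets the boundary at $x$ itself, so $r(x)=x$. Hence $r$ is a retraction of the ball onto its boundary.

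The crux, and the main obstacle, is then the \emph{no-retraction theorem}: there is no continuous $r:\mathbb{B}^d\to S^{d-1}$ restricting to the identity on $S^{d-1}$. I would establish this by the smooth analytic route to keep the appendix self-contained. First, approximate $r$ uniformly by a smooth map and correct it by a small radial adjustment so the result $\rho$ is still a boundary-fixing smooth retraction onto $S^{d-1}$; nonexistence in the smooth case then implies nonexistence in the continuous case. For such a $\rho$, consider the homotopy $\rho_s(x)=(1-s)x+s\,\rho(x)$ and the integral $I(s)=\int_{\mathbb{B}^d}\det D\rho_s(x)\,dx$. Expanding the determinant shows $I(s)$ is a polynomial in $s$. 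For small $s$ the map $\rho_s$ is a diffeomorphism of the ball fixing the boundary, so the change-of-variables formula gives $I(s)=\mathrm{vol}(\mathbb{B}^d)$ on an interval; a polynomial constant on an interval is constant, hence $I(1)=I(0)=\mathrm{vol}(\mathbb{B}^d)>0$. But $\rho_1=\rho$ takes values in $S^{d-1}$, so its Jacobian is degenerate, $\det D\rho\equiv 0$, forcing $I(1)=0$ — a contradiction. This refutes the assumption that $g$ is fixed-point-free and completes the proof. The delicate points are the smoothing step and the constancy of $I(s)$, which together carry the genuine topological content; should a fully elementary treatment be preferred, I would instead prove the theorem on a triangulated simplex via Sperner's lemma, which avoids analysis altogether.
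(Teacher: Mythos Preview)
Your proposal is correct in substance: it is the classical Milnor/Rogers analytic proof of Brouwer's fixed point theorem via the no-retraction theorem, with the alternative of Sperner's lemma mentioned as a fallback. The paper, however, takes no such route; its entire proof of this lemma is a one-line invocation of Brouwer's theorem as a known result, with a citation to \citet{brouwer1911abbildung}. So you are supplying vastly more than the authors intended for what is, in context, a black-box lemma used only to feed into the existence result for approximate fixed points.

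What each approach buys: the paper's citation keeps the appendix short and appropriate to the venue, since Brouwer is standard and not the paper's contribution. Your self-contained argument is mathematically honest but would be out of proportion here. One technical caution on your outline: the smoothing step that produces a \emph{boundary-fixing} smooth retraction from a continuous one is genuinely delicate, as you note; the cleaner variant is to smooth the fixed-point-free map $g$ itself (compactness gives $\|g(x)-x\|\geq\delta>0$, so a sufficiently close smooth approximant is also fixed-point-free), and then the retraction built from the smooth $g$ is automatically smooth and boundary-fixing, sidestepping the collar or homotopy correction entirely.
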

\begin{proof}
	It refers to a special case of Brouwer's fixed point theorem \citep{brouwer1911abbildung}.
\end{proof}

\begin{lemma}\label{fixed_point_lemma_2}
	Let $\widetilde{\mathcal{T}}$ denote the stochastic Bellman operator defined by Eq.~\eqref{eq:noisy-bellman-operator}. There exists a real range $L$, $\forall V\in[L,-L]^{|\mathcal{S}|}$, $\mathbb{E}[\widetilde{\mathcal{T}} V]\in [L,-L]^{|\mathcal{S}|}$.
\end{lemma}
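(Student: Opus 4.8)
The plan is to exhibit an explicit radius $L>0$ for which the expectation map $V\mapsto\mathbb{E}[\widetilde{\mathcal{T}}V]$ carries the box $B:=[-L,L]^{|\mathcal{S}|}$ into itself; this is exactly the ``self-map'' ingredient that, combined with continuity of $V\mapsto\mathbb{E}[\widetilde{\mathcal{T}}V]$ and Lemma~\ref{fixed_point_lemma_1} (Brouwer), delivers an approximate fixed point in the subsequent argument. First I would rewrite Eq.~\eqref{eq:noisy-bellman-operator} as an operator on state-value functions in the sense of Definition~\ref{def:fixed-points}: since $(\mathcal{T}Q)(s,a)=R(s,a)+\gamma\mathbb{E}_{s'}[V(s')]$ depends on $Q$ only through $V(s')=\max_{a'}Q(s',a')$, the induced update on state-values is $(\widetilde{\mathcal{T}}V)(s)=\max_{a\in\mathcal{A}}\big(R(s,a)+\gamma\mathbb{E}_{s'\sim P(\cdot|s,a)}[V(s')]+e(s,a)\big)$.

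The core estimate is then a routine triangle-inequality bound that is uniform over $V\in B$. Write $R_{\max}=\max_{s,a}|R(s,a)|$ and $C=\max_{s}\mathbb{E}[\max_{a}|e(s,a)|]$; the constant $C$ depends only on the noise laws, not on $V$ or $L$. For $V\in B$ we have $|\mathbb{E}_{s'}[V(s')]|\le\|V\|_\infty\le L$ because the transition expectation is an average against a probability measure, so pointwise in the noise $\max_a(\cdots)\le R_{\max}+\gamma L+\max_a|e(s,a)|$ and, fixing any action $a_0$, $\max_a(\cdots)\ge -R_{\max}-\gamma L-|e(s,a_0)|$. Taking expectations and using $|\mathbb{E}[X]|\le\mathbb{E}[|X|]$ yields $|\mathbb{E}[(\widetilde{\mathcal{T}}V)(s)]|\le R_{\max}+\gamma L+C$ for every $s$. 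The requirement $\mathbb{E}[\widetilde{\mathcal{T}}V]\in B$ thus reduces to $R_{\max}+\gamma L+C\le L$, i.e. $L\ge(R_{\max}+C)/(1-\gamma)$, and choosing $L:=(R_{\max}+C)/(1-\gamma)$ (or anything larger) closes the argument — the contraction factor $\gamma<1$ is precisely what makes this constraint on $L$ solvable.

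The only genuinely delicate point is the finiteness of $C=\max_s\mathbb{E}[\max_a|e(s,a)|]$. This is immediate whenever the noise is almost surely bounded, which is the setting of every worked example (e.g.\ $\mathrm{Uniform}(-\epsilon,\epsilon)$), and more generally it holds as soon as each $e(s,a)$ has a finite first moment — a mild condition I would add as a standing hypothesis if the continuous-density assumption alone is deemed insufficient. Beyond that, there is no real obstacle in this lemma itself; the bookkeeping above is all that is needed, and the substantive work (continuity of the averaged operator, then invoking Brouwer on $B$) happens in the step that consumes this lemma.
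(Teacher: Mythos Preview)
Your proposal is correct and follows essentially the same route as the paper: define $R_{\max}=\max_{s,a}|R(s,a)|$ and the noise constant $C=\max_s\mathbb{E}[\max_a|e(s,a)|]$ (the paper calls this $R_e$), derive the $\ell_\infty$ bound $\|\mathbb{E}[\widetilde{\mathcal{T}}V]\|_\infty\le R_{\max}+C+\gamma\|V\|_\infty$, and take $L=(R_{\max}+C)/(1-\gamma)$. Your write-up is in fact more careful than the paper's, spelling out the lower bound via a fixed action and flagging the integrability hypothesis needed for $C<\infty$.
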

\begin{proof}
	Let $R_{\text{max}}$ denote the range of the reward function for MDP $\mathcal{M}$. Let $R_{e}$ denote the range of the noisy term. Formally,
	\begin{align*}
	R_{\text{max}} &= \max_{(s,a)\in\mathcal{S}\times\mathcal{A}}|R(s,a)|, \\
	R_{e} &= \max_{s\in\mathcal{S}} \mathbb{E}\left[\max_{a\in\mathcal{A}}|e(s,a)|\right].
	\end{align*}
	Note that the $L_\infty$-norm of state value functions satisfies $\forall V\in\mathbb{R}^{|\mathcal{S}|}$,
	\begin{align*}
	\|\mathbb{E}[\widetilde{\mathcal{T}} V]\|_\infty & \leq R_{\text{max}} + R_e +\gamma \|V\|_\infty.
	\end{align*}
	We can construct the range $L=(R_{\text{max}}+R_{e})/(1-\gamma)$ to prove the given statement.
\end{proof}

\begin{lemma}\label{fixed_point_lemma_3}
	Let $\widetilde{\mathcal{T}}$ denote the stochastic Bellman operator defined by Eq.~\eqref{eq:double-bellman-operator}. There exists a real range $L$, $\forall V\in[L,-L]^{|\mathcal{S}|}$, $\mathbb{E}[\widetilde{\mathcal{T}} V]\in [L,-L]^{|\mathcal{S}|}$.
\end{lemma}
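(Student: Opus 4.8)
The plan is to reproduce the structure of the proof of Lemma~\ref{fixed_point_lemma_2}, adapted to the two-estimator update of Eq.~\eqref{eq:double-bellman-operator}; recall that, as already noted in the main text, this update is well-defined as a map on state-value functions, so there is no need to track $Q_1$ and $Q_2$ separately. Write $R_{\text{max}} = \max_{(s,a)\in\mathcal{S}\times\mathcal{A}}|R(s,a)|$ for the range of the reward function and $R_e = \max_{s\in\mathcal{S}}\max_{i\in\{1,2\}}\mathbb{E}\!\left[\max_{a\in\mathcal{A}}|e_i(s,a)|\right]$ for a common bound on the expected magnitude of the two noise terms; both are finite because $\mathcal{S}\times\mathcal{A}$ is finite and the noise terms are assumed integrable. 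It then suffices to establish
\[
\forall V\in\mathbb{R}^{|\mathcal{S}|},\qquad \big\|\mathbb{E}[\widetilde{\mathcal{T}}V]\big\|_\infty \;\le\; R_{\text{max}} + R_e + \gamma\|V\|_\infty ,
\]
since the threshold $L = (R_{\text{max}}+R_e)/(1-\gamma)$ is then self-bounding exactly as in Lemma~\ref{fixed_point_lemma_2}: $\|V\|_\infty\le L$ forces $\|\mathbb{E}[\widetilde{\mathcal{T}}V]\|_\infty\le R_{\text{max}}+R_e+\gamma L = L$, which gives the claimed box-invariance.

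To prove the displayed inequality I would unfold the operator. For a fixed input $V$, the update produces $Q_i'(s,a) = R(s,a) + \gamma\,\mathbb{E}_{s'\sim P(\cdot|s,a)}[V(s')] + e_i(s,a)$ for $i\in\{1,2\}$, and the output state-value is $(\widetilde{\mathcal{T}}V)(s) = Q_2'\big(s,\pi(s)\big)$ with $\pi(s) = \arg\max_{a\in\mathcal{A}} Q_1'(s,a)$. Substituting the expression for $Q_2'$ gives $(\widetilde{\mathcal{T}}V)(s) = R(s,\pi(s)) + \gamma\,\mathbb{E}_{s'}[V(s')] + e_2(s,\pi(s))$. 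The reward term is bounded in absolute value by $R_{\text{max}}$ and the bootstrap term by $\gamma\|V\|_\infty$, uniformly over every realization of the noise, so after taking expectations only the noise term remains.

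The single delicate point — and the only place where the argument is not a literal copy of Lemma~\ref{fixed_point_lemma_2} — is bounding $\mathbb{E}[|e_2(s,\pi(s))|]$, because $\pi(s)$ is a random action that depends on $e_1$ and need not be independent of $e_2$. The observation that makes this harmless is that the bound holds pathwise: for every realization of $(e_1,e_2)$ one has $|e_2(s,\pi(s))|\le \max_{a\in\mathcal{A}}|e_2(s,a)|$ simply because $\pi(s)\in\mathcal{A}$, hence $\mathbb{E}[|e_2(s,\pi(s))|]\le \mathbb{E}[\max_{a}|e_2(s,a)|]\le R_e$ with no independence assumption needed. Combining the three bounds and taking the supremum over $s\in\mathcal{S}$ yields the displayed inequality, completing the proof. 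I do not expect a genuine obstacle here; the substance of the lemma is precisely this bookkeeping that keeps the output box-invariant, and that box-invariance is the ingredient which, together with continuity of the map $V\mapsto\mathbb{E}[\widetilde{\mathcal{T}}V]$ and Brouwer's theorem (Lemma~\ref{fixed_point_lemma_1}), delivers the existence of an approximate fixed point for the double-Q operator.
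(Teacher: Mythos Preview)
Your proof is correct and follows the same overall strategy as the paper: bound $\|\mathbb{E}[\widetilde{\mathcal{T}}V]\|_\infty$ in terms of $\|V\|_\infty$ and then pick $L$ as the fixed point of the resulting affine recursion. The one substantive difference is that the paper obtains the sharper inequality $\|\mathbb{E}[\widetilde{\mathcal{T}}V]\|_\infty \le R_{\text{max}} + \gamma\|V\|_\infty$, i.e.\ with no noise term at all, and consequently takes $L = R_{\text{max}}/(1-\gamma)$. This works because in the double-estimator model the action $\pi(s)$ is measurable with respect to $e_1$ while the evaluated noise is $e_2$; under the standing assumptions that $e_1$ and $e_2$ are independent and zero-mean, one gets $\mathbb{E}[e_2(s,\pi(s))] = \mathbb{E}_{\pi}\bigl[\mathbb{E}[e_2(s,a)]\big|_{a=\pi(s)}\bigr] = 0$ exactly (this is the same cancellation used later in the proof of Proposition~\ref{prop:induced-policy}). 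Your pathwise bound $|e_2(s,\pi(s))|\le \max_a|e_2(s,a)|$ avoids invoking either independence or zero-mean, at the price of carrying the extra $R_e$ term; for the purpose of feeding Brouwer's theorem this makes no difference, so both arguments are fine.
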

\begin{proof}
	Let $R_{\text{max}}$ denote the range of the reward function for MDP $\mathcal{M}$. Formally,
	\begin{align*}
	R_{\text{max}} &= \max_{(s,a)\in\mathcal{S}\times\mathcal{A}}|R(s,a)|.
	\end{align*}
	Note that the $L_\infty$-norm of state value functions satisfies $\forall V\in\mathbb{R}^{|\mathcal{S}|}$,
	\begin{align*}
	\|\mathbb{E}[\widetilde{\mathcal{T}} V]\|_\infty & \leq R_{\text{max}} +\gamma \|V\|_\infty.
	\end{align*}
	We can construct the range $L=R_{\text{max}}/(1-\gamma)$ to prove the given statement.
\end{proof}

\FixedPointsExistence*
\begin{proof}
	Let $f(V)=\mathbb{E}[\widetilde{\mathcal{T}} V]$ denote the expected return of a stochastic Bellman operation. This function is continuous because all involved formulas only contain elementary functions. The given statement is proved by combining Lemma \ref{fixed_point_lemma_1}, \ref{fixed_point_lemma_2}, and \ref{fixed_point_lemma_3}.
\end{proof}

\subsection{The Induced Policy of Double Q-Learning} \label{sec:induced-policy-proof}

\begin{definition}[Induced Policy]\label{def:induced-policy}
	Given a target state-value function $V$, its induced policy $\tilde\pi$ is defined as a stochastic action selection according to the value estimation produced by a stochastic Bellman operation $\tilde\pi(a|s) =$
	
	\vspace{-0.23in}
	\begin{align*}
	\mathbb{P}\biggl[a=\mathop{\arg\max}_{a'\in\mathcal{A}}\biggl(~\underbrace{R(s,a') + \gamma\mathbb{E}_{s'}\left[V(s')\right]}_{(\mathcal{T}Q)(s,a')}+e_1(s,a')~\biggr)\biggr],
	\end{align*}
	
	\vspace{-0.1in}
	where $\{e_1(s,a)\}_{s,a}$ are drawing from the same noise distribution as what is used by double Q-learning stated in Eq.~\eqref{eq:double-bellman-operator}.
\end{definition}

\InducedPolicy*

\begin{proof}
	Let $V$ denote an approximate fixed point under the stochastic Bellman operator $\widetilde{\mathcal{T}}$ defined by Eq.~\eqref{eq:double-bellman-operator}. By plugging the definition of the induced policy into the stochastic operator of double Q-learning, we can get
	\begin{align*}
	V(s) &= \mathbb{E}[\widetilde{\mathcal{T}}V(s)] \\
	&= \mathbb{E}\left[ (\widetilde{\mathcal{T}}Q_2)\left(s,~{\arg\max}_{a\in\mathcal{A}}(\widetilde{\mathcal{T}}Q_1)(s,a)\right)\right] \\
	&= \mathbb{E}\left[ (\widetilde{\mathcal{T}}Q_2)\left(s,~{\arg\max}_{a\in\mathcal{A}}\left((\mathcal{T}Q_1)(s,a)+e_1(s,a)\right)\right)\right] \\
	&= \mathop{\mathbb{E}}_{a\sim \tilde\pi(\cdot|s)}\left[ (\widetilde{\mathcal{T}}Q_2)(s,a)\right] \\
	&= \mathop{\mathbb{E}}_{a\sim \tilde\pi(\cdot|s)}\left[ R(s,a) + \gamma \mathop{\mathbb{E}}_{s'\sim P(\cdot|s,a)}V(s') + e_2(s,a)\right] \\
	&= \mathop{\mathbb{E}}_{a\sim \tilde\pi(\cdot|s)}\left[ R(s,a) + \gamma \mathop{\mathbb{E}}_{s'\sim P(\cdot|s,a)}V(s')\right],
	\end{align*}
	which matches the Bellman expectation equation.
\end{proof}

As shown by this proposition, the estimated value of a non-optimal fixed point is corresponding to the value of a stochastic policy, which revisits the incentive of double Q-learning to underestimate true maximum values.

\subsection{A Sufficient Condition for Multiple Fixed Points} \label{sec:multiple-fixed-points-proof}

\Derivative*
\begin{proof}
	Suppose $f_s$ is a function satisfying the given condition.
	
	Let $x_i=\bar x$ and $X$ denote the corresponding point satisfying Eq.~\eqref{eq:sufficient-condition}.
	
	Let $g(x)$ denote the value of $f_s$ while only changing the input value of $x_i$ to $x$. Note that, according to Eq.~\eqref{eq:sufficient-condition}, we have $g'(\bar x)>1$.
	
	Since $f_s$ is differentiable, we can find a small region $\bar x_L < \bar x < \bar x_R$ around $\bar x$ such that $\forall x\in[\bar x_L, \bar x_R],~g'(x)>1$. And then, we have $g(\bar x_R)-g(\bar x_L)>\bar x_R-\bar x_L$.
	
	Consider to construct an MDP with only one state (see Figure \ref{fig:mdp-cdq} as an example). We can use the action corresponding to $x_i$ to construct a self-loop transition with reward $r$. All other actions lead to a termination signal and an immediate reward, where the immediate rewards correspond to other components of $X$. By setting the discount factor as $\gamma=\frac{\bar x_R-\bar x_L}{g(\bar x_R)-g(\bar x_L)}<1$ and the reward as $r=\bar x_L-\gamma g(\bar x_L)=\bar x_R-\gamma g(\bar x_R)$, we can find both $\bar x_L$ and $\bar x_R$ are solutions of the equation $x=r+\gamma g(x)$, in which $g(\bar x_L)$ and $g(\bar x_R)$ correspond to two fixed points of the constructed MDP.
\end{proof}

\begin{proposition}\label{prop:vanilla_Q-learning_condition}
	Vanilla Q-learning does not satisfy the condition stated in Eq.~\eqref{eq:sufficient-condition} in any MDPs.
\end{proposition}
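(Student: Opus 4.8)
The plan is to write out the algorithmic map $f_s$ that vanilla Q-learning induces in the model of Eq.~\eqref{eq:noisy-bellman-operator}, and then observe that this map is coordinate-wise non-expansive, so none of its partial derivatives can exceed $1$. Concretely, vanilla Q-learning uses the state value $V(s)=\max_{a\in\mathcal{A}}Q(s,a)$, so one application of the stochastic operator gives $Q^{(t+1)}(s,a)=(\mathcal{T}Q^{(t)})(s,a)+e^{(t)}(s,a)$ and hence $(\widetilde{\mathcal{T}}V)(s)=\max_{a\in\mathcal{A}}\bigl((\mathcal{T}Q)(s,a)+e(s,a)\bigr)$. Writing $x_a=(\mathcal{T}Q)(s,a)$ for the inputs, the first step is to record that the expected output is
\[
f_s(X)=\mathbb{E}\!\left[\max_{a\in\mathcal{A}}\bigl(x_a+e(s,a)\bigr)\right],\qquad X=\{x_a\}_{a\in\mathcal{A}}.
\]

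The second step is the key estimate: $X\mapsto\max_{a}(x_a+e_a)$ is non-decreasing and $1$-Lipschitz in each coordinate. Indeed, for a fixed noise realization and any $\delta\ge 0$, raising a single argument of a maximum by $\delta$ raises the maximum by an amount in $[0,\delta]$, so $\max_a\bigl((x_a+\delta\,\mathbf{1}[a=i])+e_a\bigr)-\max_a(x_a+e_a)\in[0,\delta]$. Taking expectations (the difference quotients are uniformly bounded by $1$, so dominated convergence applies in the considered models) yields $0\le f_s(X+\delta e_i)-f_s(X)\le\delta$ for every $i$. Dividing by $\delta$ and letting $\delta\to 0$ from both sides gives $0\le\frac{\partial}{\partial x_i}f_s(X)\le 1$ wherever $f_s$ is differentiable; equivalently, under the paper's standing continuity assumption on the noise density, ties in the $\arg\max$ have probability zero, Danskin's/envelope theorem applies, and $\frac{\partial}{\partial x_i}f_s(X)=\mathbb{P}\bigl[i=\arg\max_{a}(x_a+e_a)\bigr]\le 1$. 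Either way, the strict inequality $\frac{\partial}{\partial x_i}f_s(X)>1$ demanded by Eq.~\eqref{eq:sufficient-condition} is impossible in every MDP, which is exactly the claim; this also makes the contrast with double Q-learning explicit, since there the extra $Q_2$-evaluation step is what allows the derivative to exceed $1$.

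I do not expect a real obstacle here: the only points needing care are the interchange of differentiation and expectation and the treatment of ties in the $\arg\max$. Both are disposed of cleanly — the elementary two-sided difference-quotient bound above needs nothing beyond integrability of the noise and avoids differentiability issues entirely (the set of non-differentiability has measure zero), while the sharper identity $\frac{\partial}{\partial x_i}f_s(X)=\mathbb{P}[i\in\arg\max]$ follows from the continuous-density assumption already invoked elsewhere in the paper, which forces the maximizer to be a.s.\ unique.
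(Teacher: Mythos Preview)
Your argument is correct and essentially identical to the paper's own proof: both compute $f_s(X)=\mathbb{E}\bigl[\max_a(x_a+e_a)\bigr]$ and observe that the pointwise maximum is $1$-Lipschitz in each coordinate, hence so is its expectation, precluding any partial derivative strictly greater than $1$. Your write-up is a bit more careful about the passage from the Lipschitz bound to the derivative bound (and adds the Danskin-type identity $\partial_{x_i}f_s=\mathbb{P}[i=\arg\max]$ as an alternative), but the underlying idea is the same.
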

\begin{proof}
	In vanilla Q-learning, the expected state-value after one iteration of updates is
	\begin{align*}
		\mathbb{E}[V^{(t+1)}(s)] &= \mathbb{E}\left[\max_{a\in\mathcal{A}} Q^{(t+1)}(s,a)\right] \\
		&= \mathbb{E}\left[\max_{a\in\mathcal{A}} \left((\mathcal{T}Q^{(t)})(s,a)+e^{(t)}(s,a)\right)\right] \\
		&= \int_{w\in\mathbb{R}^{|\mathcal{A}|}} \mathbb{P}[e^{(t)}=w] \left(\max_{a\in\mathcal{A}} \left((\mathcal{T}Q^{(t)})(s,a)+w(s,a)\right)\right) \mathrm{d}w.
	\end{align*}
	
	Denote
	\begin{align*}
		f(\mathcal{T}Q^{(t)}, w) = \max_{a\in\mathcal{A}} \left((\mathcal{T}Q^{(t)})(s,a)+w(s,a)\right).
	\end{align*}
	
	Note that the value of $f(\mathcal{T}Q^{(t)}, w)$ is 1-Lipschitz w.r.t. each entry of $\mathcal{T}Q^{(t)}$. Thus we have $\mathbb{E}[V^{(t+1)}(s)]$ is also 1-Lipschitz w.r.t. each entry of $\mathcal{T}Q^{(t)}$. The condition stated in Eq.~\eqref{eq:sufficient-condition} cannot hold in any MDPs.
\end{proof}

\subsection{A Bad Case for Clipped Double Q-Learning}\label{sec:cdq-fixed-points}

The stochastic Bellman operator corresponding to clipped double Q-learning is stated as follows.
\begin{align}\label{eq:clipped-double-bellman-operator}
\begin{aligned}
\forall i\in\{1,2\},\quad Q_i^{(t+1)}(s,a) &= R(s,a) + \gamma\mathop{\mathbb{E}}_{s'\sim P(\cdot|s,a)}\left[V^{(t)}(s')\right] + e_i^{(t)}(s,a), \\
V^{(t)}(s) &= \min_{i\in\{1,2\}} Q_i^{(t)}\left(s,~{\arg\max}_{a\in\mathcal{A}}Q_1^{(t)}(s,a)\right).
\end{aligned}
\end{align}

An MDP where clipped double Q-learning has multiple fixed points is illustrated as Figure \ref{fig:cdq-bad-case}.

\begin{figure}[h]
	\begin{subfigure}[c]{0.29\textwidth}
		\parbox[][3cm][c]{\linewidth}{
			\centering
			\vspace{0.1in}
			\input{figures/mdp_cdq}
		}
		\caption{A simple construction}
		\label{fig:mdp-cdq}
	\end{subfigure}
	\begin{subfigure}[c]{0.35\textwidth}
		\parbox[][3cm][c]{\linewidth}{
			\centering
			\vspace{0.2in}
			\begin{tabular}{c c}
	\toprule
	$V(s_0)$ & $\tilde\pi(a_0|s_0)$ \\
	\toprule
	100.491 & 83.1\% \\
	101.833 & 100.0\% \\
	101.919 & 100.0\% \\
	\bottomrule
\end{tabular}
		}
		\caption{Numerical solutions of fixed points}
	\end{subfigure}
	\begin{subfigure}[c]{0.35\textwidth}
		\parbox[][3cm][c]{\linewidth}{
			\centering
			\includegraphics[width=0.95\linewidth]{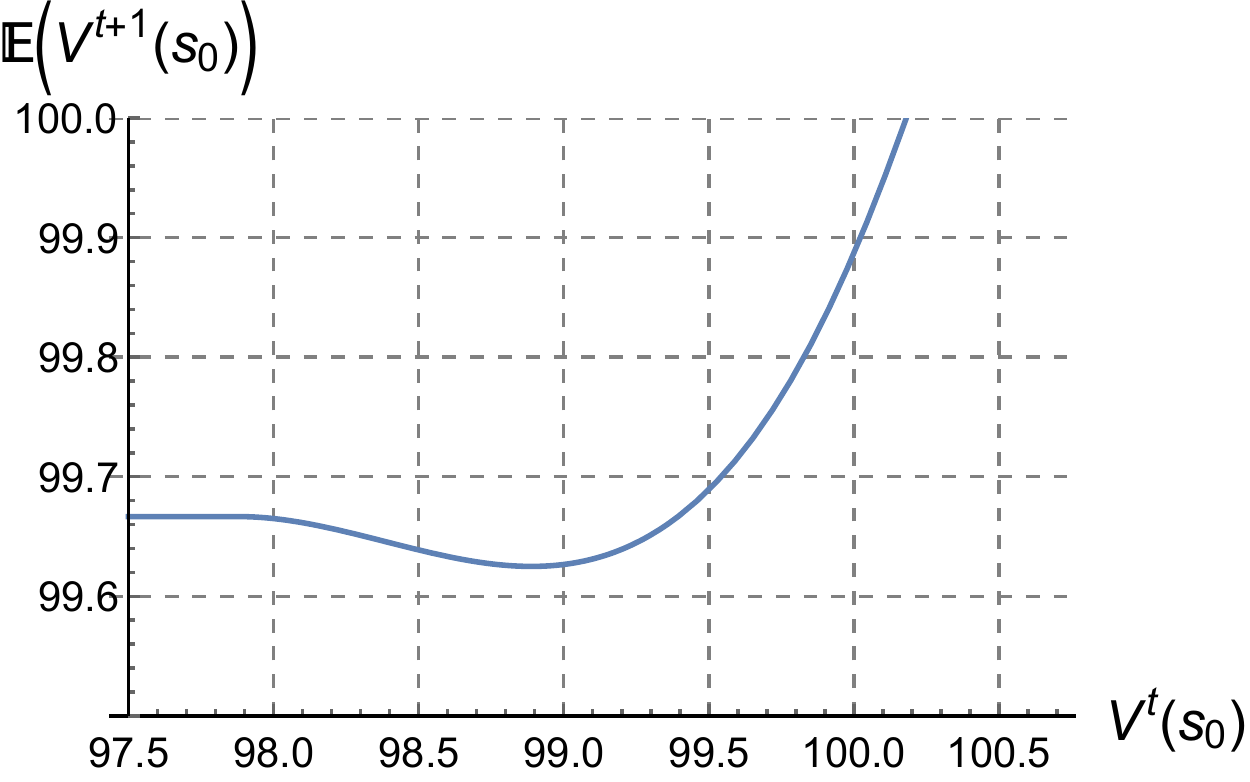}
		}
		\caption{Visualizing non-monotonicity}
	\end{subfigure}
	\caption{(a) A simple MDP where clipped double Q-learning stated as Eq.~\eqref{eq:clipped-double-bellman-operator} has multiple approximate fixed points. $R_{i,j}$ is a shorthand of $R(s_i,a_j)$. (b) The numerical solutions of the fixed points produced by clipped double Q-learning in the MDP presented above. (c) The relation between the input state-value $V^{(t)}(s_0)$ and the expected output state-value $\mathbb{E}[V^{(t+1)}(s_0)]$ generated by clipped double Q-learning in the constructed MDP.}
	\label{fig:cdq-bad-case}
\end{figure}

\subsection{Provable Benefits on Variance Reduction} \label{sec:variance-reduction}

\begin{lemma} \label{lemma:clipped-variance}
	Let $x$ denote a random variable. $y$ denotes a constant satisfying $y\leq \mathbb{E}[x]$. Then, $\text{Var}[\max\{x,y\}]\leq \text{Var}[x]$.
\end{lemma}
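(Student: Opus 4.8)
The plan is to reduce the claim to a direct second-moment comparison. Write $z = \max\{x,y\}$. Since $z \geq x$ pointwise, we have $\mathbb{E}[z] \geq \mathbb{E}[x]$, and more usefully $z - x \geq 0$. The idea is to compare $\mathbb{E}[z^2]$ with $(\mathbb{E}[z])^2$ and show the gap is no larger than $\mathbb{E}[x^2] - (\mathbb{E}[x])^2$. First I would split on the event $A = \{x \geq y\}$ and its complement $A^c = \{x < y\}$. On $A$ we have $z = x$; on $A^c$ we have $z = y$ and, crucially, $y > x$ while $y \leq \mathbb{E}[x]$. So on $A^c$ the clipping replaces a value $x$ (which lies strictly below $y \leq \mathbb{E}[x]$, hence below the mean) by the larger constant $y$, i.e. it pulls low outliers toward — but not past — the mean. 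Intuitively this cannot increase spread.

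To make this rigorous I would use the variational characterization $\text{Var}[z] = \min_{c \in \mathbb{R}} \mathbb{E}[(z-c)^2] \leq \mathbb{E}[(z - \mathbb{E}[x])^2]$, choosing the suboptimal center $c = \mathbb{E}[x]$. Then it suffices to show $\mathbb{E}[(z - \mathbb{E}[x])^2] \leq \mathbb{E}[(x - \mathbb{E}[x])^2] = \text{Var}[x]$. Pointwise, compare $(z - \mathbb{E}[x])^2$ with $(x - \mathbb{E}[x])^2$: on $A$ they are equal; on $A^c$ we have $x < y \leq \mathbb{E}[x]$, so $x - \mathbb{E}[x] < y - \mathbb{E}[x] \leq 0$, which gives $0 \leq y - \mathbb{E}[x] - (x - \mathbb{E}[x])$ wait — more directly, both $x - \mathbb{E}[x]$ and $y - \mathbb{E}[x]$ are $\leq 0$ with $y - \mathbb{E}[x] \geq x - \mathbb{E}[x]$, hence $|y - \mathbb{E}[x]| \leq |x - \mathbb{E}[x]|$ and so $(z - \mathbb{E}[x])^2 = (y - \mathbb{E}[x])^2 \leq (x - \mathbb{E}[x])^2$. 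Thus $(z - \mathbb{E}[x])^2 \leq (x - \mathbb{E}[x])^2$ holds pointwise on all of $\Omega$, and taking expectations yields $\text{Var}[z] \leq \mathbb{E}[(z-\mathbb{E}[x])^2] \leq \text{Var}[x]$, which is exactly the claim.

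The only subtle point — and the one I would be careful to state cleanly rather than the ``main obstacle,'' since the argument is short — is the use of the center $\mathbb{E}[x]$ rather than $\mathbb{E}[z]$: the variance of $z$ is minimized at $\mathbb{E}[z]$, but we are free to bound it above by evaluating the quadratic $\mathbb{E}[(z-c)^2]$ at the convenient point $c = \mathbb{E}[x]$, and it is precisely the hypothesis $y \leq \mathbb{E}[x]$ that makes that point work in the pointwise comparison on $A^c$. One should also check the edge cases (e.g. $y \leq \inf x$, where $z = x$ and the inequality is an equality; or $x$ not square-integrable, in which case both sides are infinite and the statement is vacuous) — these are routine. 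An alternative route would be to expand $\text{Var}[z] - \text{Var}[x] = \mathbb{E}[z^2 - x^2] - (\mathbb{E}[z]^2 - \mathbb{E}[x]^2) = \mathbb{E}[(z-x)(z+x)] - (\mathbb{E}[z]-\mathbb{E}[x])(\mathbb{E}[z]+\mathbb{E}[x])$ and bound each piece using $z - x = (y-x)\mathbf{1}_{A^c} \geq 0$, but the centered-quadratic argument above is cleaner and I would present that one.
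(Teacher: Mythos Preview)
Your argument is correct and is essentially the same as the paper's: both bound $\text{Var}[\max\{x,y\}]\le \mathbb{E}[(\max\{x,y\}-\mu)^2]$ via the variational characterization of variance with the suboptimal center $\mu=\mathbb{E}[x]$, and then establish the pointwise inequality $(\max\{x,y\}-\mu)^2\le (x-\mu)^2$ using the hypothesis $y\le\mu$. The only cosmetic difference is that the paper splits the integral at $\mu$ while you split on the event $\{x\ge y\}$, and the paper applies the two steps in the opposite order; the substance is identical.
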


\begin{proof}
	Let $\mu=\mathbb{E}[x]$ denote the mean of random variable $x$. Consider
	\begin{align}
		\text{Var}[x]
		&= \int_{-\infty}^\infty \mathbb{P}[x=t](t-\mu)^2\mathrm{d}t \notag \\
		&= \int_{-\infty}^\mu \mathbb{P}[x=t](t-\mu)^2\mathrm{d}t + \int_{\mu}^\infty \mathbb{P}[x=t](t-\mu)^2\mathrm{d}t \notag \\
		&\geq \int_{-\infty}^\mu \mathbb{P}[x=t](\mu-\max\{t,y\})^2\mathrm{d}t + \int_{\mu}^\infty \mathbb{P}[x=t](t-\mu)^2\mathrm{d}t \notag \\
		&= \int_{-\infty}^\infty \mathbb{P}[x=t](\mu-\max\{t,y\})^2\mathrm{d}t \notag \\
		\label{eq:clipped-variance}
		&\geq \int_{-\infty}^\infty \mathbb{P}[x=t](\mathbb{E}[\max\{x,y\}]-\max\{t,y\})^2\mathrm{d}t \\
		&= \text{Var}[\max\{x,y\}], \notag
	\end{align}
	where Eq.~\eqref{eq:clipped-variance} holds since the true average point $\mathbb{E}[\max\{x,y\}]$ leads to the minimization of the variance formula.
\end{proof}

\VarianceReduction*

\begin{proof}
	When $V^{\text{DP}}(s)$ is larger than all possible output values of $({\widetilde{\mathcal{T}}}^{\text{Boots}} V)(s)$, the given statement directly holds, since $\text{Var}\left[\max\left\{({\widetilde{\mathcal{T}}}^{\text{Boots}} V)(s),~ V^{\text{DP}}(s)\right\}\right]$ would be equal to zero.
	
	Otherwise, when $\mathbb{E}[V^{\text{Boots}}(s)]$ is smaller than $V^{\text{DP}}(s)$, we can first apply a lower bound cut-off by value $y=\mathbb{E}[V^{\text{Boots}}(s)]<V^{\text{DP}}(s)$, which gives $\text{Var}[\max\{y,V^{\text{Boots}}(s)\}]\leq \text{Var}[V^{\text{Boots}}(s)]$ and $\mathbb{E}[\max\{y,V^{\text{Boots}}(s)\}]> \mathbb{E}[V^{\text{Boots}}(s)]$. By repeating this procedure several times, we can finally get $V^{\text{DP}}(s)\leq \mathbb{E}[\max\{y,V^{\text{Boots}}(s)\}]$ and close the proof.
\end{proof}

\section{Experiment Settings and Implementation Details}\label{sec:experiment-settings}

\subsection{Evaluation Settings}

\paragraph{Probability Density of $V(s_0)$ on Two-State MDP.}
All plotted distributions are estimated by $10^3$ runs with random seeds, i.e., suffering from different noises in approximate Bellman operations. When applying doubly bounded Q-learning, we only set lower bound for $V(s_0)$ and do nothing for $V(s_1)$. The initial Q-values are set to 100.0 as default. The main purpose of setting this initial value is avoiding the trivial case where doubly bounded Q-learning learns $V(s_0)$ faster than $V(s_1)$ so that does not get trapped in non-optimal fixed points at all. We utilize \texttt{seaborn.distplot} package to plot the kernel density estimation curves and histogram bins.

\vspace{-0.05in}
\paragraph{Cumulative Rewards on Atari Games.}
All curves presented in this paper are plotted from the median performance of  5 runs with random initialization. To make the comparison more clear, the curves are smoothed by averaging 10 most recent evaluation points. The shaded region indicates 60\% population around median. The evaluation is processed in every 50000 timesteps. Every evaluation point is averaged from 5 trajectories. Following \citet{castro2018dopamine}, the evaluated policy is combined with a 0.1\% random execution.

\vspace{-0.05in}
\paragraph{Standard Deviation of Target Values.}
The evaluation of target value standard deviations contains the following steps:
\begin{enumerate}
	\item Every entry of the table presents the median performance of 5 runs with random network initialization.
	\item For each run, we first perform $10^6$ regular training steps to collect an experience buffer and obtain a basis value function.
	\item We perform a target update operation, i.e., we use the basis value function to construct frozen target values. And then we train the current values for 8000 batches as the default training configuration to make sure the current value nearly fit the target.
	\item We sample a batch of transitions from the replay buffer as the testing set. We focus on the standard deviations of value predictions on this testing set.
	\item And then we collect 8000 checkpoints. These checkpoints are collected by training 8000 randomly sampled batches successively, i.e., we collect one checkpoint after perform each batch updating.
	\item For each transition in the testing set, we compute the standard deviation over all checkpoints. We average the standard deviation evaluation of each single transition as the evaluation of the given algorithm.
\end{enumerate}

\subsection{Tabular Implementation of Doubly Bounded Q-Learning} \label{apx:tabular_implementation}

\begin{algorithm}[htp]
	\caption{Tabular Simulation of Doubly Bounded Q-Learning}
	\label{alg:tabular}
	\begin{algorithmic}[1]
		\State Initialize $V^{(0)}$
		\For{$t$ = $1,2, \cdots, T-1$}
			\For{$(s,a)\in\mathcal{S}\times\mathcal{A}$}
				\State $Q^{(t)}(s,a)\gets \mathbb{E}[r+\gamma V^{(t)}(s')]$
			\EndFor
			\State $Q_1\gets Q^{(t)}+\text{noise}$
			\State $Q_2\gets Q^{(t)}+\text{noise}$
			\For{$s\in\mathcal{S}$}
				\State $V^{(t+1)}(s)\gets (1-\alpha)V^{(t)}(s)+\alpha \max\{Q_2(s, \arg\max_a Q_1(s, a)), V_{\text{DP}}(s)\}$
			\EndFor
		\EndFor
		\State \Return $V^{(T)}$
	\end{algorithmic}
\end{algorithm}

\subsection{DQN-Based Implementation of Double Bounded Q-Learning}

\begin{algorithm}[htp]
	\caption{DQN-Based Implementation of Doubly Bounded Q-Learning}
	\label{alg:deep}
	\begin{algorithmic}[1]
		\State Initialize $\theta$, $\theta^{\text{target}}$
		\State $\mathcal{D}\gets\emptyset$
		\For{$t$ = $1,2, \cdots, T-1$}
			\State Collect one step of transition $(s_t, a_t, r_t, s_{t+1})$ using the policy given by $Q_\theta$
			\State Store $(s_t, a_t, r_t, s_{t+1})$ in to replay buffer $\mathcal{D}$
			\State Update the DP value on state $s_t$, \Comment{Prioritized sweeping \citep{moore1993prioritized}}
			\begin{align*}
				V^{{DP}}(s_t) \gets \mathop{\max}_{a:(s_t,a)\in\mathcal{D}}\frac{1}{|\mathcal{D}(s_t, a)|}\sum_{(\hat r, \hat s')\in\mathcal{D}(s_t, a)} \left(\hat r + \gamma V^{{DP}}(\hat s')\right)
			\end{align*}
			\State Perform one step of gradient update for $\theta$, \Comment{Estimated by mini-batch}
			\begin{align*}
				\theta \gets \theta - \alpha\nabla \mathop{\mathbb{E}}_{(s,a,r,s')\sim \mathcal{D}}\left[\left(r+\gamma \max\left\{\max_{a'\in\mathcal{A}}Q_{\theta^{\text{target}}}(s',a'), V^{\text{DP}}(s')\right\}-Q_\theta(s,a)\right)^2\right],
			\end{align*}
			\quad~ where $\alpha$ denotes the learning rate
			\If{$t$ mod $H$ = 0} \Comment{Update target values}
				\State $\theta^{\text{target}}\gets \theta$
				\State Update the DP values over the whole replay buffer $\mathcal{D}$
			\EndIf
		\EndFor
		\State \Return $\theta$
	\end{algorithmic}
\end{algorithm}

\subsection{Implementation Details and Hyper-Parameters} \label{apx:hyper-parameter}

Our experiment environments are based on the standard Atari benchmark tasks supported by OpenAI Gym \citep{openaigym}. All baselines and our approaches are implemented using the same set of hyper-parameters suggested by \citet{castro2018dopamine}. More specifically, all algorithm investigated in this paper use the same set of training configurations.

\begin{itemize}
	\item Number of \texttt{noop} actions while starting a new episode: 30;
	\item Number of stacked frames in observations: 4;
	\item Scale of rewards: clipping to $[-1,1]$;
	\item Buffer size: $10^6$;
	\item Batch size: 32;
	\item Start training: after collecting $20000$ transitions;
	\item Training frequency: 4 timesteps;
	\item Target updating frequency: 8000 timesteps;
	\item $\epsilon$ decaying: from $1.0$ to $0.01$ in the first $250000$ timesteps;
	\item Optimizer: Adam with $\varepsilon=1.5\cdot 10^{-4}$;
	\item Learning rate: $0.625\cdot 10^{-4}$.
\end{itemize}

For ensemble-based methods, Averaged DQN and Maxmin DQN, we adopt 2 ensemble instances to ensure all architectures presented in this paper use comparable number of trainable parameters.

All networks are trained using a single GPU and a single CPU core.
\begin{itemize}
	\item GPU: GeForce GTX 1080 Ti
	\item CPU: Intel(R) Xeon(R) CPU E5-2630 v4 @ 2.20GHz
\end{itemize}
In each run of experiment, 10M steps of training can be completed within 36 hours.

\subsection{Abstracted Dynamic Programming for Atari Games} \label{apx:practical-implementation}

\paragraph{State Aggregation.}
We consider a simple discretization to construct the state abstraction function $\phi(\cdot)$ used in Eq.~\eqref{eq:DP-target}. We first follow the standard Atari pre-processing proposed by \citet{mnih2015human} to rescale each RGB frame to an $84\times 84$ luminance map, and the observation is constructed as a stack of 4 recent luminance maps. We round the each pixel to 256 possible integer intensities and use a standard static hashing, Rabin-Karp Rolling Hashing \citep{karp1987efficient}, to set up the table for storing $V_{\text{DP}}$. In the hash function, we use two large prime numbers $(\approx 10^9)$ and select their primary roots as the rolling basis. From this perspective, each image would be randomly projected to an integer within a range $(\approx 10^{18})$.

\paragraph{Conservative Action Pruning.}
To obtain a conservative value estimation, we follow the suggestions given by \citet{fujimoto2019off} and \citet{liu2020provably} to prune the unseen state-action pairs in the abstracted MDP. Formally, in the dynamic programming module, we only allow the agent to perform state-action pairs that have been collected at least once in the experience buffer. The reward and transition functions of remaining state-action pairs are estimated through the average of collected samples.

\paragraph{Computation Acceleration.}
Note that the size of the abstracted MDP is growing as the exploration. Regarding computational considerations, we adopt the idea of \textit{prioritized sweeping} \citep{moore1993prioritized} to accelerate the computation of tabular dynamic programming. In addition to periodically applying the complete Bellman operator, we perform extra updates on the most recent visited states, which would reduce the total number of operations to obtain an acceptable estimation. Formally, our dynamic programming module contains two branches of updates:
\begin{enumerate}
	\item After collecting each complete trajectory, we perform a series of updates along the collected trajectory. In the context of \textit{prioritized sweeping}, we assign the highest priorities to the most recent visited states.
	\item At each iteration of target network switching, we perform one iteration of value iteration to update the whole graph.
\end{enumerate}

\paragraph{Connecting to Parameterized Estimator.}
Finally, the results of abstracted dynamic programming would be delivered to the deep Q-learning as Eq.~\eqref{eq:doubly-bounded}. Note that the constructed doubly bounded target value $y^{\text{DB}}_{\theta_{\text{target}}}$ is only used to update the parameterized value function $Q_\theta$ and would not affect the computation in the abstracted MDP.

\section{Visualization of Non-Optimal Approximate Fixed Points}

\begin{wrapfigure}[16]{R}{6cm}
	\centering \includegraphics[height=4.5cm]{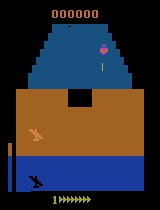}
	\caption{Visualization of a non-optimal approximate fixed point in Zaxxon.}
	\label{fig:Zaxxon}
\end{wrapfigure}
As shown in Figure~\ref{fig:main-experiment}, the sample efficiency of our methods dramatically outperform double DQN in an Atari game called Zaxxon. We visualize a screen snapshot of a scenario that the double DQN agent gets stuck in for a long time (see Figure~\ref{fig:Zaxxon}), which may refer to a practical example of non-optimal fixed points.

In this scenario, the agent (i.e., the aircraft at the left-bottom corner) needs to pass through a gate (i.e., the black cavity in the center of the screen). Otherwise it will hit the wall and crash. The double DQN usually gets stuck in a policy that cannot pass through this gate, but our method can find this solution very fast. We provide a possible interpretation here. In this case, the agent may find the path to pass the gate by random exploration, but when it passes the gate, it will be easier to be attacked by enemies. Due to the lack of data, the value estimation may suffer from a large amount of error or variance in these states. It would make the agent be confused on the optimal action. By contrast, the dynamic programming planner used by double bounded Q-learning is non-parametric that can correctly reinforce the best experience.

\section{Additional Experiments on Random MDPs}

In addition to the two-state toy MDP, we evaluate our method on a benchmark task of random MDPs.

\paragraph{Experiment Setting.} Following \citet{jiang2015dependence}, we generate 1000 random MDPs with 10 states and 5 actions from a distribution. For each state-action pair $(s,a)$, the transition function is determined by randomly choosing 5 non-zero entries, filling these 5 entries with values uniformly drawn from $[0, 1]$, and finally normalizing it to $P(\cdot|s, a)$. i.e.,
\begin{align*}
	P(s'|s, a) = \frac{\widehat P(s'|s, a)}{\sum_{s''\in\mathcal{S}}\widehat P s''|s, a)}\quad\text{where}\quad \widehat P(s'|s, a)\sim \text{Uniform}(0,1).
\end{align*}
The reward values $R(s,a)$ are independently and uniformly drawn from $[0, 1]$. We consider the discount factor $\gamma=0.99$ for all MDPs. Regarding the simulation of approximation error, we consider the same setting as the experiments in section~\ref{sec:two_state_MDP_experiments}. We consider Gaussian noise $\mathcal{N}(0,0.5)$ and use soft updates with $\alpha=10^{-2}$.

\paragraph{Approximate Dynamic Programming.} Our proposed method uses a dynamic programming module to construct a second value estimator as a lower bound estimation. In this experiment, we consider the dynamic programming is done in an approximate MDP model, where the transition function of each state-action pair $(s,a)$ is estimated by $K$ samples. We aim to investigate the dependency of our performance on the quality of MDP model approximation.

\paragraph{Experiment Results.} We evaluate two metrics on 1000 random generated MDPs: (1) the value estimation error $(V-V^*)$, and (2) the performance of the learned policy $(V^\pi-V^*)$ where $\pi=\arg\max Q$. All amounts are evaluated after 50000 iterations. The experiment results are shown as follows:

\begin{table}[h]
	\centering
	\begin{tabular}{ccc}
		\toprule
		Evaluation Metric & Estimation Error $(V-V^*)$ & Policy Performance $(V^\pi-V^*)$  \\ \midrule
		double Q-learning & -16.36 & -0.68 \\ 
		Q-learning & 33.15 & -0.73 \\ \midrule
		ours ($K=10$) & 0.53 & -0.62 \\ 
		ours ($K=20$) & 0.54 & -0.62 \\ 
		ours ($K=30$) & 0.37 & -0.61 \\ \bottomrule
	\end{tabular}
	\vspace{0.1in}
	\caption{Evaluation on random MDPs.}
	\label{tab:random_mdps}
\end{table}

We conclude the experiment results in Table \ref{tab:random_mdps} by two points:

\begin{enumerate}
	\item As shown in the above table, Q-learning would significantly overestimate the values, and double Q-learning would significantly underestimate the values. Comparing to baseline algorithms, the value estimation of our proposed method is much more reliable. Note that our method only cuts off low-value noises, which may lead to a trend of overestimation. This overestimation would not propagate and accumulate during learning, since the first estimator $y^{\text{Boots}}$ has incentive to underestimate values. The accumulation of overestimation errors cannot exceed the bound of $y^{\text{DP}}$ too much. As shown in experiments, the overestimation error would be manageable.
	\item The experiments show that, although the quality of value estimation of Q-learning and double Q-learning may suffer from significant errors, they can actually produce polices with acceptable performance. This is because the transition graphs of random MDPs are strongly connected, which induce a dense set of near-optimal polices. When the tasks have branch structures, the quality of value estimation would have a strong impact on the decision making in practice.
\end{enumerate}

\clearpage

\section{Additional Experiments for Variance Reduction on Target Values}

\begin{table}[h]
	\centering
	\caption{Evaluating the standard deviation of target values w.r.t. different sequences of training batches. The presented amounts are normalized by the value scale of corresponding runs. ``$\dag$'' refers to ablation studies where we train the network using our proposed approach but evaluate the target values computed by bootstrapping estimators.}
	\label{table:variance_full}
	\vspace{0.1in}
	\begin{tabular}{c | c c c | c c c}
	\toprule
	\textsc{Task Name} & DB-ADP-C & DB-ADP-C$^\dag$ & CDDQN & DB-ADP & DB-ADP$^\dag$ & DDQN \\
	\midrule
	\textsc{Alien} & \textbf{0.006} & 0.008 & 0.010 & 0.008 & 0.009 & 0.012 \\
	\textsc{BankHeist} & \textbf{0.009} & 0.010 & 0.010 & \textbf{0.009} & 0.011 & 0.013 \\
	\textsc{BattleZone} & \textbf{0.012} & \textbf{0.012} & 0.031 & 0.014 & 0.014 & 0.036 \\
	\textsc{Frostbite} & \textbf{0.006} & 0.007 & 0.012 & 0.007 & 0.007 & 0.015 \\
	\textsc{Jamesbond} & 0.010 & 0.010 & \textbf{0.009} & 0.010 & 0.011 & 0.010 \\
	\textsc{MsPacman} & \textbf{0.007} & 0.009 & 0.012 & \textbf{0.007} & 0.009 & 0.013 \\
	\textsc{Qbert} & \textbf{0.008} & 0.010 & 0.011 & 0.009 & 0.011 & 0.012 \\
	\textsc{RoadRunner} & \textbf{0.009} & 0.010 & 0.010 & \textbf{0.009} & 0.011 & 0.012 \\
	\textsc{StarGunner} & \textbf{0.009} & 0.010 & \textbf{0.009} & \textbf{0.009} & 0.010 & 0.010 \\
	\textsc{TimePilot} & 0.012 & 0.013 & 0.012 & \textbf{0.011} & 0.013 & \textbf{0.011} \\
	\textsc{WizardOfWor} & \textbf{0.013} & 0.017 & 0.029 & 0.017 & 0.018 & 0.034 \\
	\textsc{Zaxxon} & \textbf{0.010} & 0.012 & \textbf{0.010} & 0.011 & 0.011 & \textbf{0.010} \\
	\bottomrule
\end{tabular}
\end{table}

As shown in Table \ref{table:variance_full}, our proposed method can achieve the lowest variance on target value estimation in most environments.

\section{Additional Experiments for Baseline Comparisons and Ablation Studies}

\begin{figure}[h]
	\centering
	\includegraphics[width=0.95\linewidth]{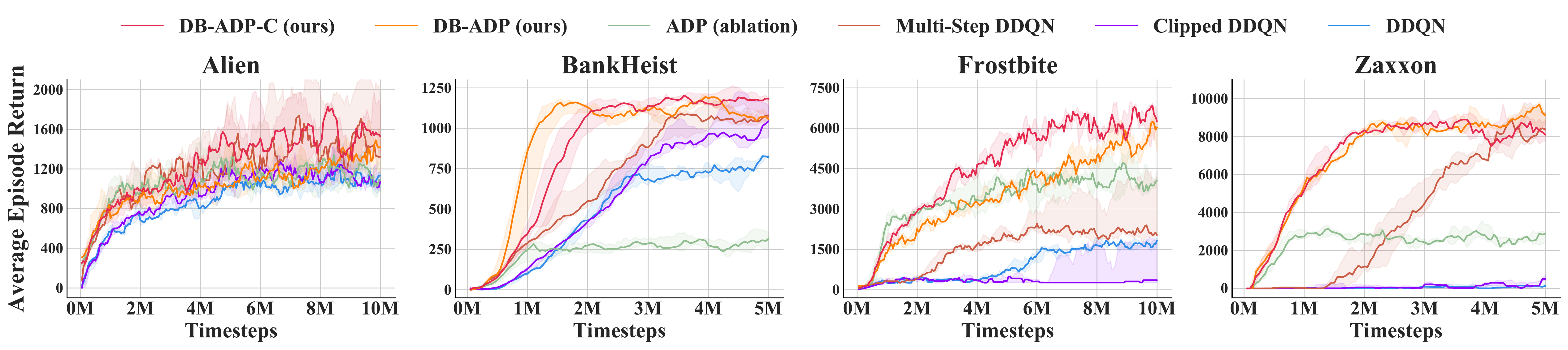}
	\caption{Learning curves on a suite of Atari benchmark tasks for comparing two additional baselines.}
	\label{fig:additional_experiments}
\end{figure}

We compare our proposed method with two additional baselines:
\begin{itemize}
	\item \textbf{ADP (ablation):} We conduct an ablation study that removes Bellman error minimization from the training and directly optimize the following objective:
	\begin{align*}
		L^{\text{ADP}}(\theta) = \mathop{\mathbb{E}}_{(s_t,a_t)\sim D} \left(Q_{\theta}(s_t,a_t)-y^{\text{DP}}(s_t,a_t)\right)^2,
	\end{align*}
	where $y^{\text{DP}}(s_t,a_t)$ denotes the target value directly generated by dynamic programming. As shown in Figure \ref{fig:additional_experiments}, without integrating with Bellman operator, the abstracted dynamic programming itself cannot find a good policy. It remarks that, in our proposed framework, two basis estimators are supplementary to each other.
	\item \textbf{Multi-Step DDQN:} We also compare our method to a classical technique named multi-step bootstrapping that modifies the objective function as follows:
	\begin{align*}
		L^{\text{Multi-Step}}(\theta;\theta_{\text{target}},K) = \mathop{\mathbb{E}}_{(s_t,a_t)\sim D} \left(Q_{\theta}(s_t,a_t)-\left(\sum_{u=0}^{K-1}r_{t+u}+\gamma^KV_{\theta_{\text{target}}}(s_{t+K})\right)\right)^2,
	\end{align*}
	where we select $K=3$ as suggested by \citep{castro2018dopamine}. As shown in Figure \ref{fig:additional_experiments}, our proposed approach also outperforms this baseline.
\end{itemize}
%
%
%
%

\end{document}